  \providecommand\BibTeX{{%
    \normalfont B\kern-0.5em{\scshape i\kern-0.25em b}\kern-0.8em\TeX}}}
\theoremstyle{plain}
\newtheorem{thm}{Theorem}[section]
\theoremstyle{definition}
\newtheorem{lemma}[thm]{Lemma}
\begin{document}

\title{Not All Features Are Equal: Discovering  Essential Features for Preserving Prediction Privacy}


\author{Fatemehsadat Mireshghallah\quad Mohammadkazem Taram \quad  Ali Jalali$^\dagger$}
\author{Ahmed Taha Elthakeb$^\ast$\quad Dean Tullsen\quad Hadi Esmaeilzadeh}

\email{{fmireshg,mtaram}@eng.ucsd.edu,  ajjalali@amazon.com, ahmed.t.althakeb@gmail.com, {tullsen,hadi}@eng.ucsd.edu}

\affiliation{%
   \institution{University of California San Diego\\ $^\dagger$Amazon.com, Inc. \quad\quad\quad $^\ast$Samsung Electronics}
}

\renewcommand{\authors}{Fatemehsadat Mireshghallah, Mohammadkazem Taram,  Ali Jalali, Ahmed Taha Elthakeb, Dean Tullsen, and Hadi Esmaeilzadeh}

\renewcommand{\shortauthors}{Mireshghallah et al.}

\begin{abstract}
When receiving machine learning services from the cloud, the provider does not need to receive all features; in fact, only a subset of the features are necessary for the target prediction task.
Discerning this subset is the key problem of this work.
We formulate this problem as a gradient-based perturbation maximization method that discovers this subset in the input feature space with respect to the functionality of the prediction model used by the provider. 
After identifying the subset, our framework, \sieve, suppresses the rest of the features using utility-preserving constant values that are discovered through a separate gradient-based optimization process.
%
%
We show that \sieve does not necessarily require collaboration from the service provider beyond its normal service, and can be applied in scenarios where we only have black-box access to the service provider's model.
We theoretically guarantee that \sieve's optimizations reduce the upper bound of the Mutual Information (MI) between the data and the sifted representations that are sent out.
Experimental results show that \sieve reduces the mutual information between the input and the sifted representations by 85.01\%  with only negligible reduction in utility (1.42\%).
In addition, we show that \sieve greatly diminishes adversaries' ability to learn and infer non-conducive features. 

\end{abstract}
\begin{CCSXML}
<ccs2012>
   <concept>
       <concept_id>10002978.10003029.10011150</concept_id>
       <concept_desc>Security and privacy~Privacy protections</concept_desc>
       <concept_significance>500</concept_significance>
       </concept>
   <concept>
       <concept_id>10010147.10010257.10010293.10010294</concept_id>
       <concept_desc>Computing methodologies~Neural networks</concept_desc>
       <concept_significance>300</concept_significance>
       </concept>
   <concept>
       <concept_id>10002978.10003029.10011703</concept_id>
       <concept_desc>Security and privacy~Usability in security and privacy</concept_desc>
       <concept_significance>500</concept_significance>
       </concept>
   <concept>
       <concept_id>10010147.10010178.10010224.10010225</concept_id>
       <concept_desc>Computing methodologies~Computer vision tasks</concept_desc>
       <concept_significance>100</concept_significance>
       </concept>
 </ccs2012>
\end{CCSXML}

\ccsdesc[500]{Security and privacy~Privacy protections}
\ccsdesc[300]{Computing methodologies~Neural networks}
\ccsdesc[500]{Security and privacy~Usability in security and privacy}
\ccsdesc[100]{Computing methodologies~Computer vision tasks}
\ccsdesc[100]{Mathematics of computing~Information theory}
\keywords{Privacy-preserving Machine Learning,  Deep Learning, Fairness}


\maketitle

\vspace{-1ex}
\section{Introduction}
\label{sec:intro}
The computational complexity of Machine Learning (ML) models has pushed their execution to the cloud.
The edge devices on the user side capture and send their data to the cloud for \emph{prediction services}.
On the one hand, this exchange of data for services has become pervasive since the provider can enhance the user experience by potentially using the data for the betterment of its services~\cite{rana2015data}, which in many cases is offered for free.
On the other hand, as soon as the data is sent to the cloud, it can be misused by the cloud provider, or leaked through security vulnerabilities even if the cloud provider is trusted~\cite{spectre,nytimes2, packetchasing, meltdown, facebook}.
The insight in this paper is that a large fraction of the data is not relevant to the  prediction service and can be sifted prior to sending the data out,
thus enabling access to the services with much greater privacy.
As such, we propose \sieve, an orthogonal approach to the existing techniques that mostly rely on cryptographic solutions and impose prohibitive delays and computational cost.
Table~\ref{table:crypto} summarizes most state-of-the-art encryption-based methods and their runtime compared to unencrypted execution on GPUs.
As shown, these techniques impose between $318\times$ to $14,000\times$ slowdown.
An image classification inference is performed in multiple seconds, an order of magnitude away from the service-level agreement between users and cloud providers, which is between 10 to 100 milliseconds according to MLPerf industry measures~\cite{mlperf:isca20, mlperf}.
Such slowdowns will lead to unacceptable interaction with services that require near real-time response (e.g., home automation cameras).
\sieve provides a middle ground, where there is a provable degree of privacy while the prediction latency is essentially unaffected.
%
%
To that end, \sieve only sends out the features that the provider essentially requires to carry out the requested service.
%
%
%
Existing privacy techniques are applicable to scenarios that can tolerate longer delays, but are not currently suitable for consumer applications, which rely on interactive prediction services.
However, having no privacy protection is also not desirable.

To that end, this paper presents \sieve, a framework that sifts the features of the data based on their relevance to the target prediction task. 
To solve this problem, we reformulate the objective as a \emph{gradient-based} optimization problem, that generates a \emph{sifted representation of the input}.
The intuition is that if a feature can consistently tolerate the addition of noise without degrading the utility, that feature is not conducive to the classification task.
As such, we augment each feature $i$ with a scaled addition of a noise distribution ($\sigma_i . \mathcal{N}(0,1)$) and learn the scales ($\sigma_i$s).
\begin{table*}
\caption{Slowdown of cryptographic techniques vs. conventional GPU execution on Titan Xp and \sieve.\label{table:crypto}}
    \vspace{-2ex}
   \centering
    \begin{tabular}{llllrrrr}
\\\toprule
\multicolumn{1}{l}{Cryptographic} &
  Release &
  \multirow{2}{*}{DNN} &
  \multirow{2}{*}{Dataset} &
  \multicolumn{3}{c}{Prediction Time (sec)} & 
  \multirow{2}{*}{Slowdown} \\ \cmidrule(l){5-7}
  Technique &
  Year &
   &
   &
  Encrypted &
  Conventional &
  \sieve &
  \\\midrule
FALCON~\cite{falcon}           & 2020 & VGG-16    & ImageNet  & 12.96 & 0.0145 & 0.0148 & 906$\times$    \\
DELPHI~\cite{delphi}          & 2020 & ResNet-32 & CIFAR-100 & 3.5 & 0.0112 & 0.0113 & 318$\times$  \\
CrypTen~\cite{crypten} & 2019 & ResNet-18 & ImageNet  & 8.30  & 0.0121 & 0.0123 & 691$\times$    \\
GAZELLE~\cite{gazelle}          & 2018 & ResNet-32 & CIFAR-100 & 82.00 & 0.0112 & 0.0113 & 7,454$\times$  \\
MiniONN~\cite{minionn}          & 2017 & LeNet-5   & MNIST     & 9.32  & 0.0007 & 0.0007 & 14,121$\times$ \\ \bottomrule
\end{tabular}
\end{table*}
To learn the scales, we start with a pre-trained classifier with known parameters and drive a loss function with respect to the scales while the formulation comprises the model as a known analytical function.
The larger the scales, the larger the noise that can be added to a corresponding feature, and the less conducive the feature is.
As such, the learned scales are thresholded to suppress the non-conducive features to a constant value, which yields the sifted representation of the input.
By removing such features, \sieve guarantees that no information about them can be learned or inferred from the sifted representation that the consumer sends.
Figure~\ref{fig:patterns} shows examples of conducive features for multiple tasks discovered by \sieve and the corresponding sifted representation for an example image.
%
%
Our differentiable formulation of finding the scales minimizes the upper bound of the Mutual Information (MI) between the irrelevant features and the sifted representation (maximizing privacy) while maximizing the lower bound of MI between the relevant features and the generated representation (preserving utility).
%
%

\begin{figure}
    \centering
    \includegraphics[width=\linewidth]{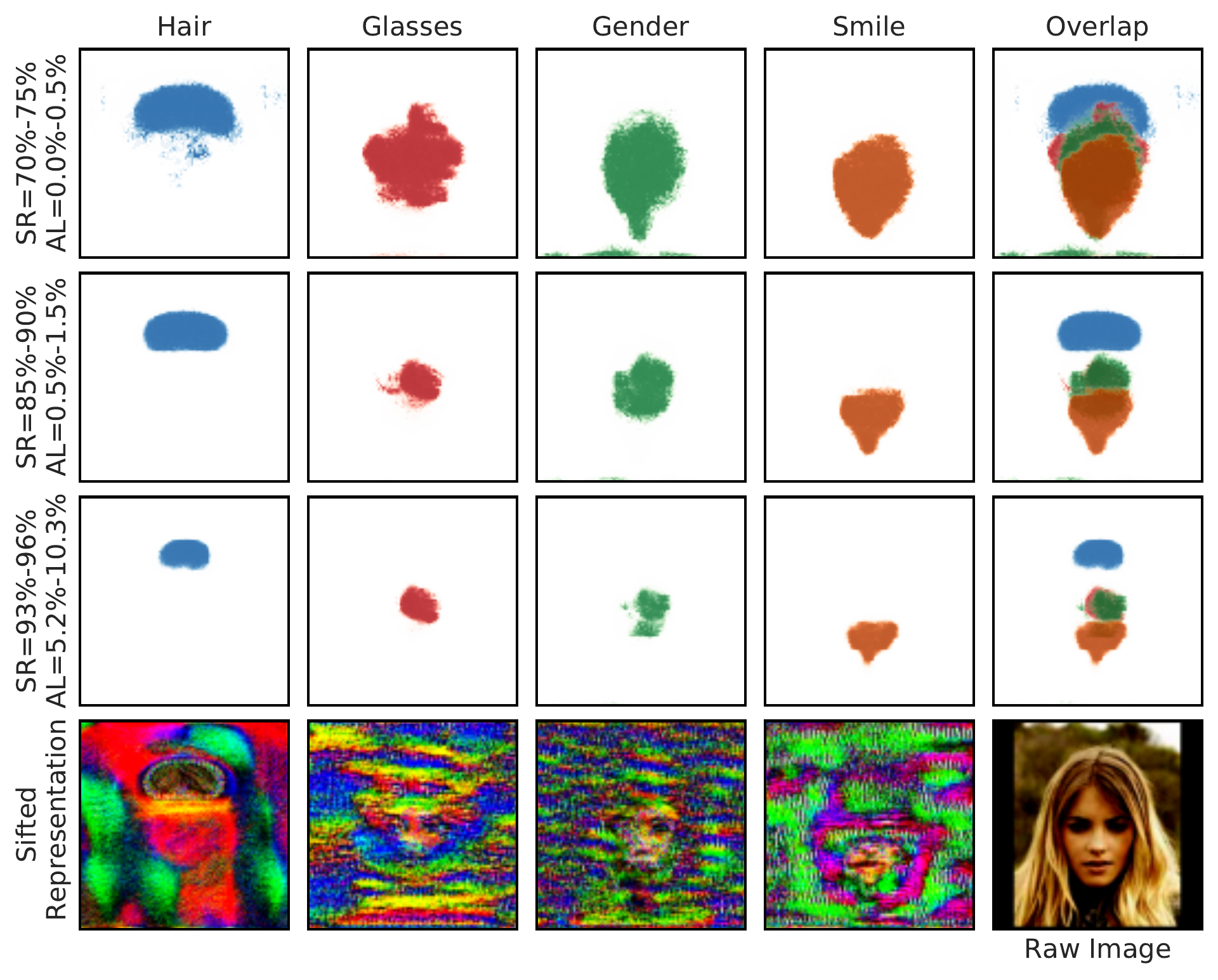}
    \caption{\sieve's discovered features for target DNN classifiers (VGG-16) for black-hair color, eyeglasses, gender, and smile detection. The colored features are conducive to the task. The 3 sets of features depicted for each task correspond to different suppression ratios (SR). AL denotes the range of accuracy loss imposed by the suppression.  
    }
    \label{fig:patterns}
\end{figure}


%
%
%

Experimental evaluation with real-world datasets of UTKFace~\cite{utkface}, CIFAR-100~\cite{cifar100}, and MNIST~\cite{mnist} shows that \cloak can reduce the mutual information between input images and the publicized representation by $85.01\%$ with an accuracy loss of only $1.42\%$.
%
%
%
%
In addition, we evaluate the protection offered by \sieve against adversaries that try to infer data properties from sifted representations on CelebA dataset~\cite{celeba}. 
We show that sifted representations generated for  ``smile detection'' as the target task effectively prevent adversaries from inferring information about hair color and/or eyeglasses.
We show that \sieve can provide these protections even in a black-box setting where we do not have access to the service provider's model parameters or architecture.
Additionally, we show that \sieve outperforms Shredder~\cite{shredder},  a recent work in prediction privacy that heuristically samples and reorders additive noise at run time to imitate the previously collected patterns.
We further show that \sieve can improve the classifier's fairness.
%
%
The code for the proposed method is available at \url{https://github.com/mireshghallah/cloak-www-21}, and the details of the experimental setup and the hyperparameters used for the evaluations are provided in the appendix.
%
%
%
%
%

\section{Preliminaries}

 In this section, we discuss the notation and fundamental concepts used in the rest of the paper, starting with our threat model.


\textbf{Threat Model.}
We assume a remote prediction service setup, where a specific target prediction task is executed on input data. 
Our goal is to create a representation $\mathbf{x_s}$ of the input data $\mathbf{x}$ that has only the features that are essential to the target task, and suppresses excessive features in the input. 
We then send this $\mathbf{x_s}$ to the service provider. 
For our theoretical and empirical evaluations, we adopt supervised classification tasks as our target. 
We assume two access modes to the target classifier $f_{\theta}$: white-box and black-box. In the white-box setup, we assume access to the architecture and parameters $\theta$ of the target classifier. In the black-box setup, we have no access to the target classifier, nor the data it was trained on. 
In both cases, we need labeled training data from the data distribution $\mathcal{D}$, that the target classifier was trained on.
We do not, however, need access to the exact same training data, nor do we need any extra collaboration from the service provider, such as a change in infrastructure or model parameters.

\textbf{Feature Space.} We assume each given input $\mathbf{x}$ to be a collection of features, and group these features based on their importance for the decision making of the target classifier, $f_\theta$.
%
We define the two disjoint feature groups of conducive features, $\mathbf{c}$, which are those relevant to the target task and important to $f_\theta$ and non-conducive features, $\mathbf{u}$, which are less relevant.
Our goal is to find the conducive features and only keep them.
%


%


 
\textbf{Mutual Information.}
 The amount of mutual information between the raw data $\mathbf{x}$, and the representation that is to be publicized, $\mathbf{x_s}$ is a measure of privacy that is widely used in literature~\cite{MI2016, lalitha17, lalitha17b}, and is denoted by $I(\mathbf{x}; \ \mathbf{x_s})$. 
\cloak aims at learning representations $\mathbf{x_s}$ that decrease this mutual information while maintaining the accuracy of the target classification task. 
Formally, \cloak tries to minimize $I(\mathbf{x_s}; \ \mathbf{u})$ while maximizing $I(\mathbf{x_s};\ \mathbf{c})$. 


\section{\sieve's Optimization Problem}\label{sec:problem}

This section formally describes the optimization problem and presents a computationally tractable method towards solving it. 
Let $\mathbf{x}\in\mathbf{R}^n$ be an input,  and $\mathbf{c}\subseteq \mathbf{x}$ and $\mathbf{u}\subseteq \mathbf{x}$ be two disjoint sets of conducive and non-conducive features with respect to our target classifier ($f_{\theta}$).
We construct a noisy representation $\mathbf{x_c}=\mathbf{x}+\mathbf{r}$ where $\mathbf{r}\sim\mathcal{N}(\boldsymbol{\mu}, \boldsymbol{\Sigma})$ and $\boldsymbol{\Sigma}$ is a diagonal covariance matrix, as we set the elements of the noise to be independent.
%
%
%
This noisy representation helps find the conducive features and is used to create a final suppressed representation $\mathbf{x_s}$ that is sent to the service provider. 
The goal is to construct $\mathbf{x_c}$ such that the mutual information between $\mathbf{x_c}$ and $\mathbf{u}$ is minimized (for privacy), while the mutual information between $\mathbf{x_c}$ and $\mathbf{c}$ is maximized (for utility). 
The  is written as the following soft-constrained optimization problem:
\begin{equation}\label{eq:def}
    \min\limits_{\mathbf{x_c}} \quad I(\mathbf{x_c};\ \mathbf{u}) - \lambda I(\mathbf{x_c};\ \mathbf{c}) 
\end{equation}
The intuitive solution is to set $\mathbf{x_c} = \mathbf{c}$. But, directly finding $\mathbf{c}$ is, in most cases, not tractable due to the high complexity of classifiers. 
%
To solve this problem, we bound the terms of our optimization problem of Equation~\ref{eq:def}, and then take an iterative approach~\cite{BBB}. To this end, we find an upper bound for $I(\mathbf{x_c};\ \mathbf{u})$ and a lower bound for $I(\mathbf{x_c};\ \mathbf{c})$.

\subsection{Upper bound on $I(\mathbf{x_c}; \mathbf{u})$} \label{sec:upper}
%
%
%
%
%
Since $\mathbf{u}$ is a subset of $\mathbf{x}$, the following holds:
\begin{equation}\label{eq:upper:0}
    \begin{split}
    I(\mathbf{x_c};\ \mathbf{u}) &\leq I(\mathbf{x_c};\ \mathbf{x}) =  \mathcal{H}(\mathbf{x_c})-\mathcal{H}(\mathbf{x_c}|\mathbf{x}) = \mathcal{H}(\mathbf{x_c})-\frac{1}{2}\log((2\pi e)^n|\boldsymbol{\Sigma}|)
    \end{split}
\end{equation}

Where $\mathcal{H}(\mathbf{x_c}|\mathbf{x})$ is the entropy of the added Gaussian noise. 
%
Here $\boldsymbol{|\Sigma|}$ denotes the determinant of the covariance matrix. Then by applying Theorem~\ref{thm:gaussian-max} (from the appendix) which gives an upper bound for the entropy, to $\mathbf{x_c}$, we can write:
%
%
\begin{equation}\label{eq:upper:3}
     I(\mathbf{x_c};\ \mathbf{u})\leq \frac{1}{2}\log((2\pi e)^n\frac{|Cov(\mathbf{x_c})|}{|\boldsymbol{\Sigma}|})
\end{equation}


Since  $\mathbf{x}$ and $\mathbf{r}$ are independent variables and $\mathbf{x_c} = \mathbf{x}+\mathbf{r}$, we have 
$|Cov(\mathbf{x_c})|=|Cov(\mathbf{x})+\boldsymbol{\Sigma}|$.
%
%
In addition, since covariance matrices are positive semi-definite, we can get the eigen decomposition of $Cov(\mathbf{x})$ as $Q\Lambda Q^T$ where the diagonal matrix $\Lambda$ has the eigenvalues.
Since $\boldsymbol{\Sigma}$ is already a diagonal matrix, $|Cov(\mathbf{x})+\boldsymbol{\Sigma}|=|Q(\Lambda+\boldsymbol{\sigma}^2)Q^T|=\prod_{i=1}^{n}(\lambda_i+\sigma_i^2)$. By substituting this in Equation~\ref{eq:upper:3}, and simplifying we get the upper bound for $I(\mathbf{x_c};\ \mathbf{u})$ as the following:
%
\begin{equation}\label{eq:upper:final}
     I(\mathbf{x_c};\ \mathbf{u})\leq \frac{1}{2}\log((2\pi e)^n \prod_{i=1}^{n}(1+\frac{\lambda_i}{\sigma_i^2}))
\end{equation}


\subsection{Lower bound on $I(\mathbf{x_c}; \mathbf{c})$} \label{sec:lower}


\begin{thm}
The lower bound on $I(\mathbf{x_c};\mathbf{c})$ is:
\begin{equation}  \label{eq:lower}
    \mathcal{H}(\mathbf{c}) + \max\limits_{q}\  \mathbb{E}_{\mathbf{x_c}, \mathbf{c}}[\log\; q(\mathbf{c}|\mathbf{x_c})]
\end{equation}
Where $q$ denotes all members of a possible family of distributions for this conditional probability. 
\end{thm}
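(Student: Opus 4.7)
The plan is to recognize this as the standard Barber--Agakov variational lower bound on mutual information, and derive it by introducing an arbitrary variational distribution $q(\mathbf{c}|\mathbf{x_c})$ and exploiting non-negativity of the KL divergence.

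First I would decompose the mutual information in its entropic form: $I(\mathbf{x_c};\mathbf{c}) = \mathcal{H}(\mathbf{c}) - \mathcal{H}(\mathbf{c}|\mathbf{x_c})$. Since $\mathcal{H}(\mathbf{c})$ does not depend on $q$, the entire task reduces to producing a tight upper bound on $\mathcal{H}(\mathbf{c}|\mathbf{x_c})$ that is expressible as an expectation, so that it can serve as the tractable surrogate used in practice.

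Next I would introduce the variational distribution. For any conditional density $q(\mathbf{c}|\mathbf{x_c})$ in some family, I would write
\begin{equation*}
-\mathbb{E}_{\mathbf{x_c},\mathbf{c}}[\log q(\mathbf{c}|\mathbf{x_c})] = \mathcal{H}(\mathbf{c}|\mathbf{x_c}) + \mathbb{E}_{\mathbf{x_c}}\bigl[\mathrm{KL}(p(\mathbf{c}|\mathbf{x_c})\,\|\,q(\mathbf{c}|\mathbf{x_c}))\bigr],
\end{equation*}
which is a direct algebraic manipulation after adding and subtracting $\mathbb{E}[\log p(\mathbf{c}|\mathbf{x_c})]$ inside the expectation. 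Because the KL term is non-negative by Gibbs' inequality, this rearranges to $\mathcal{H}(\mathbf{c}|\mathbf{x_c}) \leq -\mathbb{E}_{\mathbf{x_c},\mathbf{c}}[\log q(\mathbf{c}|\mathbf{x_c})]$, with equality precisely when $q$ equals the true posterior $p(\mathbf{c}|\mathbf{x_c})$.

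Substituting this bound back into the entropic decomposition yields $I(\mathbf{x_c};\mathbf{c}) \geq \mathcal{H}(\mathbf{c}) + \mathbb{E}_{\mathbf{x_c},\mathbf{c}}[\log q(\mathbf{c}|\mathbf{x_c})]$ for every admissible $q$. Since the inequality is valid pointwise in $q$, I would conclude by taking the supremum over the variational family to obtain the claimed expression. The only subtle point, which I would flag briefly, is that the maximum is attained (and the bound is tight) when the family is rich enough to contain the true posterior; otherwise the statement remains a valid but loose lower bound. No step here is a serious obstacle, the main care is in the bookkeeping of expectations under the joint $p(\mathbf{x_c},\mathbf{c})$ when swapping $\log p$ for $\log q$.
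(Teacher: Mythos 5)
Your proof is correct and follows essentially the same route as the paper's: both derive the Barber--Agakov bound from the non-negativity of $D_{KL}\bigl(p(\mathbf{c}|\mathbf{x_c})\,\|\,q(\mathbf{c}|\mathbf{x_c})\bigr)$ and then split off the $\mathcal{H}(\mathbf{c})$ term, the only difference being that you bound the conditional entropy $\mathcal{H}(\mathbf{c}|\mathbf{x_c})$ by the cross-entropy while the paper manipulates the mutual-information integral directly. Your added remark that the bound is tight only when the variational family contains the true posterior is a worthwhile clarification that the paper states more informally.
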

\begin{proof}
\vspace{-2ex}
The lemma and accompanying proof for this theorem are in the appendix. 
\vspace{-2ex}
\end{proof}

\subsection{Loss Function}\label{sec:loss}
Now that we have the upper and lower bounds, we can reduce our problem to the following optimization where we minimize the upper bound (Equation~\ref{eq:upper:final})  and maximize the lower bound (Equation~\ref{eq:lower}): 
\begin{equation}
        \min\limits_{\boldsymbol{\sigma},q} \quad \frac{1}{2}\log((2\pi e)^n \prod_{i=1}^{n}(1+\frac{\lambda_i}{\sigma_i^2})) + \lambda \sum_{\mathbf{c_i}, \mathbf{x_{c_i}}} (-\log \; q(\mathbf{c_i}|\mathbf{x_{c_i}}))
\end{equation}

We omit the $\mathcal{H}(\mathbf{c})$ from the lower bound in Equation~\ref{eq:lower}, since it is a constant. We also write the expected value in the same equation in the form of a summation over all possible representations and conducive features.
To make this summation tractable, in our loss function we replace this part of the formulation with the empirical cross-entropy loss of the target classifier over all training examples.
In other words, the loss of preserving the conducive features is substituted by the classification loss for those features. 
We also relax the optimization further by rewriting the first term. Since minimizing this term is equivalent to  maximizing the standard deviation of the noise, we change the fraction into a subtraction. Our final loss function becomes: 
\begin{equation} \label{eq:loss2}
\begin{split}
    \mathcal{L}\!=\!-\log{\frac{1}{n}\sum_{i=0}^{n} \sigma^2_i } +\lambda\,\mathbb{E}_{\mathbf{r}\sim \mathcal{N}(\boldsymbol\mu,\mathbf{\sigma^2}), \mathbf{x}\sim\mathcal{D}}\Big[-\sum_{k=1}^{K} y_k\log\left(f_{\theta}(\mathbf{x} +\mathbf{r})\right)_k\Big]
    \end{split}
    \vspace{-1ex}
\end{equation}

%
%
The second term is the expected cross-entropy loss, over the randomness of the noise and the data instances. 
The variable $\boldsymbol{\mu}$ is the mean of the noise distributions. 
The variable $K$ is the number of classes for the target task, and $y_k$ is the indicator variable that determines if a given example belongs to class $k$. 
More intuitively, the first term increases the noise of each feature and provides privacy. The second term decreases the classification error and maintains accuracy. 
The parameter $\lambda$ is a knob that provides a trade-off between these two. 
%
%
%
%
%
%


\subsection{Suppressed Representation}\label{sec:suppress-th}

After finding the noisy representation $\mathbf{x_c}$, we use it to generate the final suppressed representation $\mathbf{x_s}$. 
By applying a cutoff threshold $T$ on $\boldsymbol\sigma$, we generate binary mask $\mathbf{b}$ such that $b_i = 1$ if ${\sigma_i} \geq T$, and $b_i = 0$ otherwise.
%
We create representation $\mathbf{x_s}= (\mathbf{x} + \mathbf{r}) \circ\mathbf{b} +\mathbf{\mu_s}$, where $\mathbf{r}\sim\mathcal{N}(0,\boldsymbol\sigma)$ and $\mathbf{\mu_s}$ are constant values that are set to replace non-conducive features. 
According to the data processing inequality~\cite{norm2011intuitive}, the upper bound on $I(\mathbf{x_c}; \mathbf{u})$ holds for $\mathbf{x_s}$ as well, since $ I(\mathbf{x_s}; \mathbf{u}) \leq I(\mathbf{x_c}; \mathbf{u})$. 
The same inequality also implies that the lower bound achieved for  $I(\mathbf{x_c}; \mathbf{c})$ does not necessarily hold for $\mathbf{x_s}$.
To address this, we write another optimization problem,
%
%
to find $\mathbf{\mu_s}$ such  that cross entropy loss, i.e,  $\min_{\mathbf{\mu_s}} \sum_{k=1}^{K} y_k\log\left(f_{\theta}(\mathbf{x_s})\right)_k $ is minimized.
Solving this guarantees that the lower bound of Equation~\ref{eq:lower} also holds for $I(\mathbf{x_s};\ \mathbf{c})$. 

%
%
%
%



\section{\sieve Framework} 
\label{sec:train}
This section describes \sieve's framework in more detail.  \sieve comprises of two phases: first, an offline phase where we solve the optimization problems to find the conduciveness of the features and the suppression constant values. 
Second, an online prediction phase where the non-conducive features in a given input are suppressed and a sifted and a suppressed representation of the data is sent to the remote target service provider for prediction. 
In this section we discuss details of these two phases, starting from the details of the offline phase.

\subsection{Noise Re-parameterization and Constraints} \label{sec:constraint}

To solve the optimization problem of Section~\ref{sec:problem}, \sieve's approach is to cast the noise distribution parameters as trainable tensors, making it possible to solve the problem using conventional gradient-based methods. 
To be able to define gradients over the means and variances, we rewrite the noise sampling to be $\mathbf{r} = \boldsymbol{\sigma}\circ\mathbf{e} + \boldsymbol{\mu}$, instead of $\mathbf{r} \sim \mathcal{N}(\boldsymbol{\mu}, \boldsymbol{\sigma^2})$, where $\mathbf{e}\sim\mathcal{N}(0, 1)$.
%
The symbol $\circ$ denotes the element-wise multiplication of elements of $\boldsymbol{\sigma}$ and $\mathbf{e}$.
This redefinition enables us to formulate the problem as an analytical function for which we can calculate the gradients.
We also need to reparameterize $\boldsymbol{\sigma}$ to limit the range of standard deviation of each feature ($\sigma$).
If it is learned through a gradient-based optimization, it can take on any value, while we know that variance can not be negative.
%
%
In addition, we also do not want the $\sigma$s to grow over a given maximum, ${M}$.  
%
%
We put this extra constraint on the distributions, to limit the $\sigma$s from growing infinitely (to decrease the loss), taking the growth opportunity from the standard deviation of the other features.
Finally, we define a trainable parameter $\boldsymbol{\rho}$ and write $\boldsymbol{\sigma} = \frac{1.0 + \tanh(\boldsymbol\rho)}{2} {M}$,
%
where the $\tanh$ function is used to constraint the range of the $\sigma$s, and the addition of $1$ is to guarantee the positivity of the variance. 
%


\subsection{\sieve's Perturbation Training Workflow} \label{sec:train-workflow}

Algorithm~\ref{alg:work} shows the steps of \sieve's optimization process. 
This algorithm takes the training data ($\mathcal{D}$), labels ($y$), a pre-trained model ($f_\theta$), and the privacy-utility knob ($\lambda$) as input, and computes the optimized tensor for noise distribution parameters. 
During the initialization step,  the algorithm sets the trainable tensor for the means ($\boldsymbol\mu$) to 0 and initializes the substitute trainable tensor ($\boldsymbol\rho$) with a large negative number.
This generates the initial value of zero for the standard deviations.
%
%

%
In each step of the optimization, the algorithm calculates the loss function on a batch of training data and computes the gradient of the loss with respect to the $\boldsymbol\mu$ and $\boldsymbol\rho$ by applying backpropagation. 
Since the loss (Equation~\ref{eq:loss2}) incorporates expected value over noise samples, \sieve uses Monte Carlo sampling~\cite{monte} with a sufficiently large number of noise samples to calculate the loss.
This means that to apply a single update to the trainable parameters, \sieve runs multiple forward passes on the entire classifier, at each pass draws new samples for the noise tensor (the elements of which are independently drawn), and averages over the losses and applies the update using the average.
However, in practice, if mini-batch training is used, only a single noise sample for each update can yield desirable results, since a new noise tensor is sampled for each mini-batch.
Once the training is finished, the optimized mean and standard deviation tensors are collected and passed to the next phase.
%

%


%


\subsection{Feature Sifting and Suppression}\label{sec:suppress}

%
%
%
For sifting the features we use the trained standard deviation tensor ($\boldsymbol\sigma$), which we call ``noise map".
A high value in the noise map for a feature indicates that the feature is less important.
%
Different noise maps are created by changing the privacy-utility knob ($\lambda$).
%
%
We use a cutoff threshold $T$, to map the continuous spectrum of values of a noise map, to binary values ($\mathbf{b}$).   
While choosing the cutoff threshold ($T$) depends on the privacy-utility trade-offs, in practice, finding the optimal value for $T$ is not challenging. That is because the trained $\sigma$s are easy to be sifted as they are pushed to either side of the spectrum, i.e., they either have a very large (near $M$) or a very small value (near $0$). See Section~\ref{sec:threshold} for more details. 

%
%
%
%
%


\begin{algorithm}[t]
\caption{  Perturbation Training}
   \label{alg:work}
\begin{algorithmic}[1]
   \STATE {\bfseries Input:} $\mathcal{D}$, $y$, $f_\theta$, $m$, $\lambda$
    \STATE Initialize  $\boldsymbol{\mu}\!=\!0$, $\boldsymbol{\rho}\!=\!-10$ and $M\geq 0$
   \REPEAT
   \STATE Select training batch $\mathbf{x}$ from $\mathcal{D}$ 
   \STATE Sample $\mathbf{e} \sim \mathcal{N}(0,1)$
   \STATE Let $\boldsymbol\sigma = \frac{1.0 + \tanh(\boldsymbol{\rho})}{2} (M)$
   \STATE Let $\mathbf{r} = \boldsymbol{\sigma}\circ\mathbf{e} + \boldsymbol\mu$
   \STATE Take gradient step on $\boldsymbol\mu$, $\boldsymbol\rho$ from Eq.~\eqref{eq:loss2}
   \UNTIL{Algorithm converges}
   \STATE {\bfseries Return:} $\boldsymbol\mu$, $\boldsymbol\sigma$
   
\end{algorithmic}
\end{algorithm}

%
To suppress the non-conducive features, one simple way is to send the noisy representations, i.e,  adding noise from the $(\boldsymbol\mu, \boldsymbol\sigma^2)$ to the input to get the $x_c$ representations that are sent out for prediction.  
This method, however, suffers from two shortcomings: first, it does not directly suppress and remove the features, which could leave the possibility of data leakage. Second, because of the high standard deviations of noise, in some cases, the generated representation might be out of the domain of the target classifier, which could have negative effects on the utility.  
Another way of suppressing the non-conducive features is to replace them with zeros (black pixels in images for example).
This scheme also, suffers from potential accuracy degradation, as the values we are using for suppression (i.e. the zeros) might not match the distribution of the data that the classifier expects.

To address this, we find a suppressed representation (Section~\ref{sec:suppress-th}), i.e., we {train} the constant suppression values that need to replace the non-conducive features.
Intuitively, these learned values reveal what the target classifier perceives as common among all the inputs from the training set, and what it expects to see.
Algorithm~\ref{alg:suppress} shows the steps of this training process.
%
The algorithm finds $\boldsymbol\mu_s$, the values by which we replace the non-conducive features. 
The only objective of this training process is to increase the accuracy, therefore we use the cross-entropy loss as our loss function. 
%


\hfill
\begin{algorithm}[t]
    \caption{ Suppression-Value Training }
   \label{alg:suppress}
\begin{algorithmic}[1]
   \STATE {\bfseries Input:} $\mathcal{D}$, $y$, $f_\theta$, $\boldsymbol\sigma$, $\boldsymbol\mu$, $\mathbf{b}$
    \STATE Initialize  $\boldsymbol{\mu_s}\!=\!\boldsymbol\mu$
   \REPEAT
   \STATE Select training batch $\mathbf{x}$ from $\mathcal{D}$ 
   \STATE Sample $\mathbf{r} \sim \mathcal{N}(0,\boldsymbol\sigma^2)$
   \STATE Let $\mathbf{x_s} = (\mathbf{x}+\mathbf{r})\circ{b} + \boldsymbol{\mu_s}$
   \STATE Take gradient step on $\boldsymbol\mu_s$ from $\mathbb{E}_{r}[\mathcal{L}_{CE}(f_{\theta}(\mathbf{x_s}),\ y)]$
   \UNTIL{Algorithm converges}
   \STATE {\bfseries Return:} $\boldsymbol\mu_s$
   
\end{algorithmic}
\end{algorithm}



\subsection{Online Prediction}

The prediction (inference) phase is when unseen test inputs that we protect are sent to the remote service provider for classification. 
This process is computationally efficient; it only adds noise sampling, masking, and addition to the normal conventional prediction process.
First, a noise tensor sampled from the optimized distribution $\mathcal{N}(0, \boldsymbol\sigma^2)$ is added to the input, then the binary mask $b$ is applied to the noisy input image. Finally, $\boldsymbol\mu_s$ is added to $\mathbf{x}$ and the resulting sifted representation is sent to the service provider.
As an example, the last row of Figure~\ref{fig:patterns} shows representations generated by \sieve, for different tasks, using the noise maps from the third row. As the images show, the non-conducive features are removed and replaced with $\boldsymbol\mu_s$. The conducive features, however, are visible.

%

 
%

%

\section{Experimental Results}
\label{sec:exps}
To evaluate \cloak, we use four real-world datasets on four Deep Neural Networks (DNNs). Namely, we use VGG-16~\cite{vgg} and ResNet-18~\cite{resnet} on CelebA~\cite{celeba}, AlexNet~\cite{alexnet} on CIFAR-100~\cite{cifar100}, a modified version of VGG-16 model on UTKFace~\cite{utkface}, and LeNet-5~\cite{lenet5} on MNIST~\cite{mnist}. 
The mutual information numbers reported in this section are estimated over the test set using the Shannon Mutual Information estimator provided by the Python ITE toolbox~\cite{itetoolbox}.
%
%
For the experiments that are devised to compare \sieve with previous work, Shredder~\cite{shredder}, in order to create a similar setup, we apply \sieve to the last convolution layer of the DNN and create \textit{sifted intermediate representations} which are then sent to the target classifier. 
In the other experiments, \sieve is applied directly to the input images. 
%
%
Code and information about hyper-parameters used in each of the experiments is provided in the appendix.
\subsection{Detailed Experimental Setup}
In this section, we elaborate on the details of our experimental setup. This includes dataset specifications, hardware and OS specifications, neural network architectures, and finally, mutual information estimation. 
\subsubsection{Dataset Specifications}
There are four datasets used in our evaluations: CelebA~\cite{celeba}, CIFAR-100~\cite{cifar100}, UTKFace~\cite{utkface} and MNIST~\cite{mnist}. We have used these datasets with VGG-16~\cite{vgg}, ResNet-18~\cite{resnet}, AlexNet~\cite{alexnet}, VGG-16 (modified), and LeNet-5~\cite{lenet5} neural networks, respectively.
We define a set of target prediction tasks  over these datasets. 
Specifically, we use smile detection, black-hair color classification, and eyeglass detection on CelebA, the 20 super-class classification on CIFAR-100, and gender detection on UTKFace. For MNIST, we use a classifier that detects if the input is greater than five and another one that classifies what the input digit actually is.  
The accuracy numbers reported in this section are all on a held-out test set, which has not been seen during training by the neural networks.
For \cloak results, since the output is not deterministic, we repeatedly run the prediction ten times on the test set with the batch size of one and report the {mean accuracy}.
Since the {standard deviation} of the accuracy numbers is small (consistently less than $1.0\%$) the confidence bars are not visible on the graphs. 
The input image sizes for CelebA, CIFAR-100, UTKFace and MNIST are $224\times224\times3$, $32\times32\times3$, $32\times32\times3$, and $32\times32$, respectively.
In addition, in our experiments, the inputs are all normalized to 1.
The experiments are all carried out using Python 3.6 and PyTorch 1.3.1.
We use Adam optimizer for perturbation training. 
%
%
%
%
%
%
%
%
\subsubsection{Experimentation Hardware and OS}
We have run the experiments for CelebA dataset on an Nvidia RTX 2080 Ti GPU, with 11GB VRAM, paired with 10 Intel Core i9-9820X processors with 64GBs of memory. The rest of the experiments were run on the CPU. The system runs an Ubuntu 18.04 OS, with CUDA version V10.2.89. 

\subsubsection{Neural Network Architectures}

%
%

The code for all the models is available in the supplementary materials. The VGG-16 for UTKFace is different from the conventional one in the size of the last 3 fully connected layers. They are (512,256),  (256,256) and (256,2). 
The pre-trained accuracy of the networks for smile detection, super-class classification, gender detection, and greater than five detection are $91.8\%$, $55.7\%$, $87.87\%$, and $99.29\%$.

\subsubsection{Mutual Information Estimation}
The mutual information between the input images and their noisy representations are estimated over the test set images using ITE~\cite{itetoolbox} toolbox's Shannon mutual information estimator. For MNIST images, our dataset has inputs of size $32\times32$ pixels, which we flatten to $1024$ element vectors, for estimating the mutual information. For other datasets, since the images are larger ($32\times32\times3$), there are more dimensions and mutual information estimation is not accurate. So, we calculate mutual information channel by channel (i.e. we estimate the mutual information between the red channel of the image and its noisy representation, then the green channel and then blue), and we average over all channels. 

The numbers reported in ~\ref{sec:res2} are mutual information loss percentages, which means the lost mutual information among the publicized image and the original one is divided by the information content in the original images. This information content was estimated using self-information (Shannon information), using the same toolbox.

\subsection{Privacy-Accuracy Trade-Off} \label{sec:res2}

 \begin{figure*}
    \centering
        \begin{subfigure}{0.3\textwidth}
     \includegraphics[width=\linewidth]{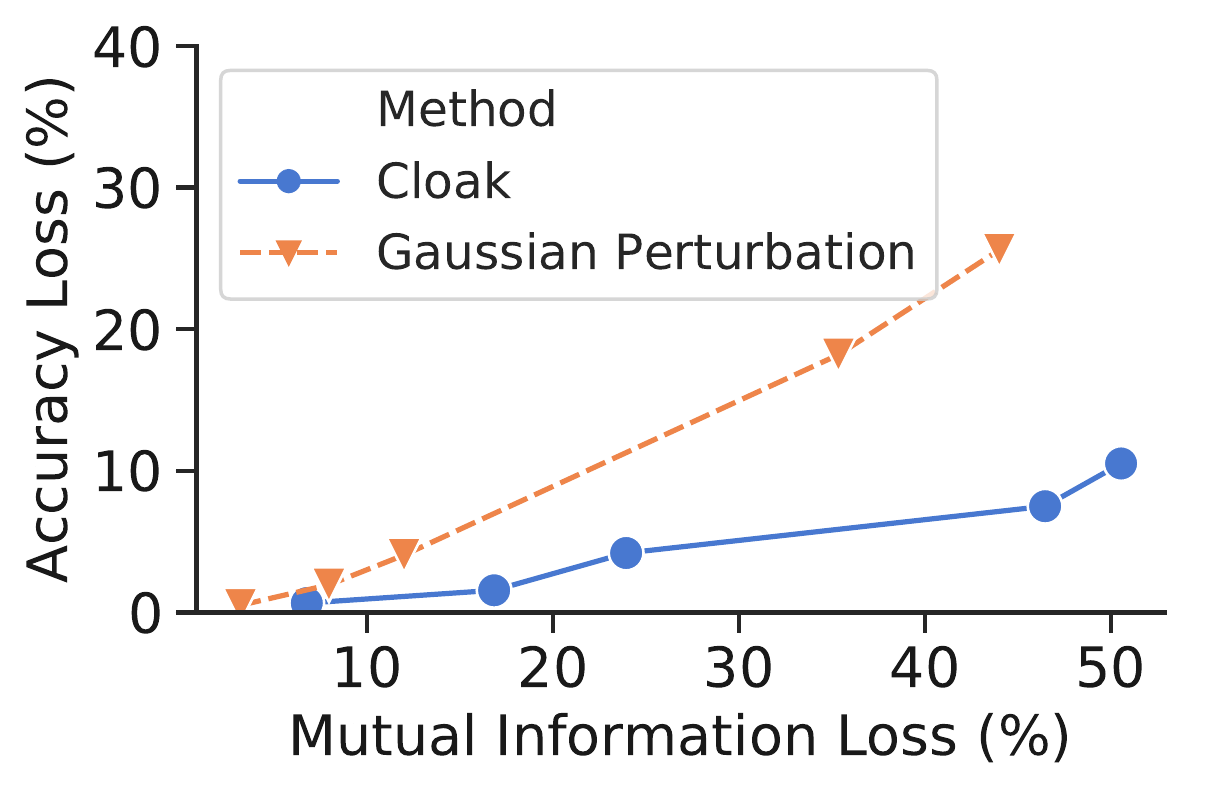}
     \caption{CIFAR-100}
     \label{fig:mi-cifar100}
    \end{subfigure}
    \begin{subfigure}{0.3\textwidth}
     \includegraphics[width=\linewidth]{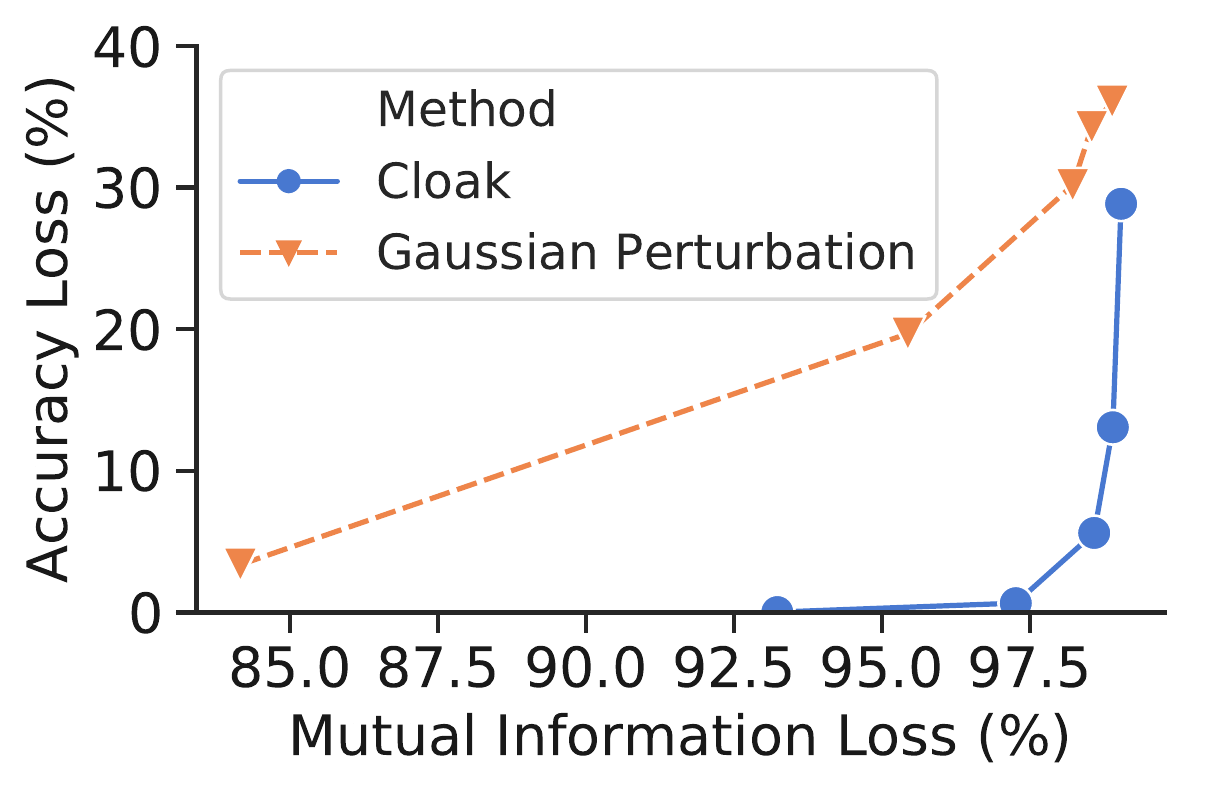}
     \caption{MNIST}
     \label{fig:mi-lenet}
    \end{subfigure}
    \begin{subfigure}{0.3\textwidth}
     \includegraphics[width=\linewidth]{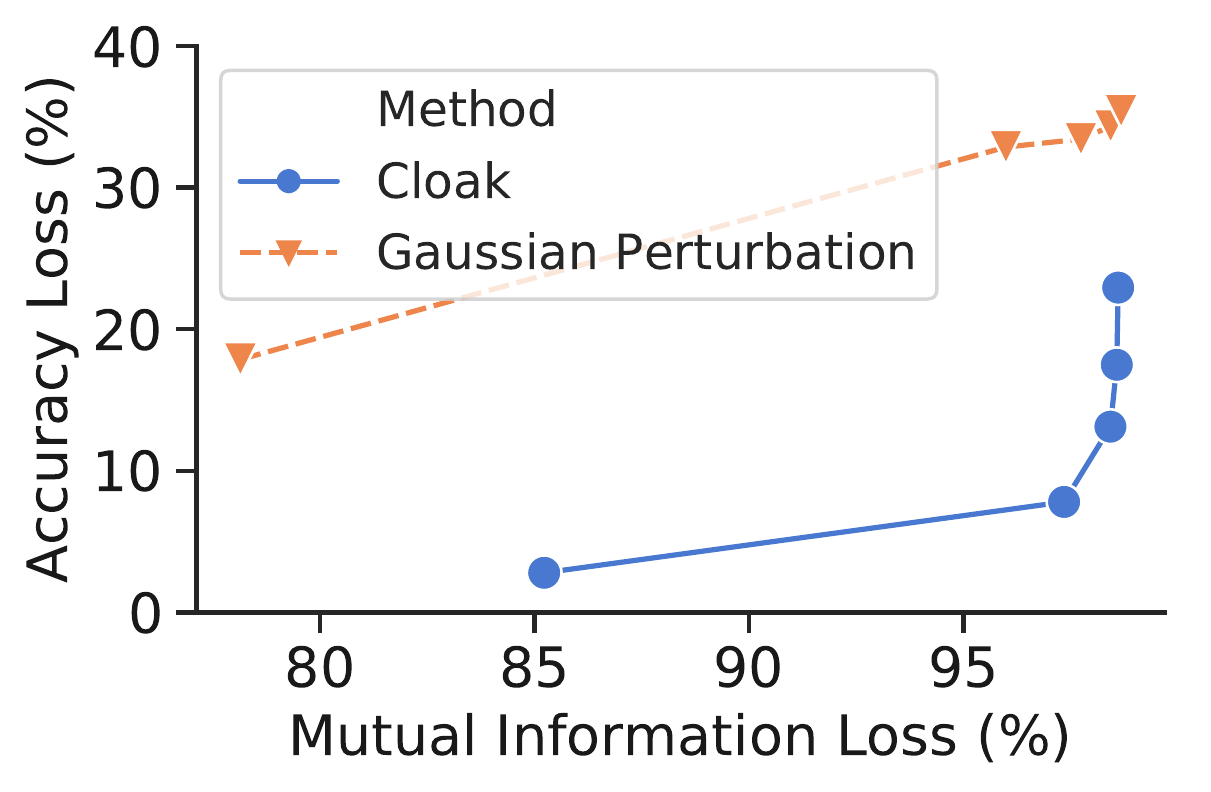}
     \caption{UTKFace}
     \label{fig:mi-xx100}
    \end{subfigure}
    \caption{Privacy-accuracy trade-off for CIFAR-100, MNIST and UTKFace dataset.}
    \label{fig:mi}
\end{figure*}
Figure~\ref{fig:mi} shows accuracy loss of the DNN classifiers using sifted representations vs. the loss in mutual information. 
This is  the loss in mutual information between the original image and its noisy representation, divided by the amount of information in bits in the original image.
The target tasks are 20 superclass classification for CIFAR-100, $>5$ classification for MNIST and gender classification for UTKFace.
In this experiment, we compare \sieve to adding Gaussian perturbation of mean zero and different standard deviations to all pixels of the images. For fair comparison, we choose \sieve's suppression with noisy representations.
%
%
%
%
For MNIST and UTKFace, \sieve reduces the information in the input significantly ($93\%$ and $85\%$ respectively) with little loss in accuracy ($0.5\%$ and $2.7\%$). 
In CIFAR-100, the accuracy is slightly more sensitive to the mutual information loss. 
%
%
This is due to the difference in the classification tasks. The tasks for MNIST and UTKFace have only two classes, while for CIFAR-100, the classifier needs to distinguish between 20 classes.

For all three datasets, we see that \sieve achieves a  significantly higher accuracy for same loss in mutual information compared to Gaussian perturbation.
This is because \sieve adds more noise to the irrelevant features, and less to the relevant ones, whereas Gaussian perturbations are added uniformly across the input. 
We do not present mutual information results for the CelebA dataset here, since the input images have an extremely large number of features and the mutual information estimator tool is not capable of estimating the mutual information accurately. 
%
\subsection{Adversary to Infer Information}
\label{sec:adv}

To further evaluate the effectiveness of the representations that \sieve generates, we devise an experiment in which an adversary tries to infer properties of the sifted representations using a DNN classifier.
We assume two adversary models here. First, the adversary has access to a unlimited number of samples from the sifted representations, therefore she can re-train her classifier to regain accuracy on the sifted representations. 
Second, a model in which the adversary's access to the sifted representation is limited and therefore she cannot retrain her classifier on the sifted representations. 
In this experiment, we choose smile detection as the target prediction task for which \sieve generates representations. 
Then, we model adversaries who try to discover two properties from the sifted representations: whether people in images wear glasses or not and whether their hair is black or not.  
The adversaries have pre-trained classifiers for both these tasks. The classifiers are VGG-16 DNNs, with accuracy of  $96.4\%$ and $88.2\%$ for glasses and hair color classification, respectively.
%
%

Figure~\ref{fig:adversary} shows the results of this experiment.
%
Each point in this figure is generated using a noise map with a Suppression Ration (SR) noted in the figure.
Higher SR means more features are suppressed.
%
%
%
%
%
When adversaries do not retrain their models, using  sifted representations with $95.6\%$ suppression ratio causes the adversaries to almost completely lose their ability to infer eyeglasses or hair color and reach to the random classifier accuracy ($50\%$). This is achieved while the target smile detection task only loses $5.16\%$ accuracy.
%
%
When adversaries retrain their models, using representations with slightly higher suppression ratio ($98.3\%$) achieves the same goal. But this time, the accuracy of the target task drops to  $78.9\%$.
%
%
%
With the same suppression ratio, the adversary who tries to infer hair color loses more accuracy than the adversary who tries to infer eyeglasses. 
This is because, as shown in Figure~\ref{fig:patterns}, the conducive features of smile overlap less with the conducive features of hair than with the conducive features of eyeglasses. 
%
%

\begin{figure}
    \centering
    \includegraphics[width=0.99\linewidth]{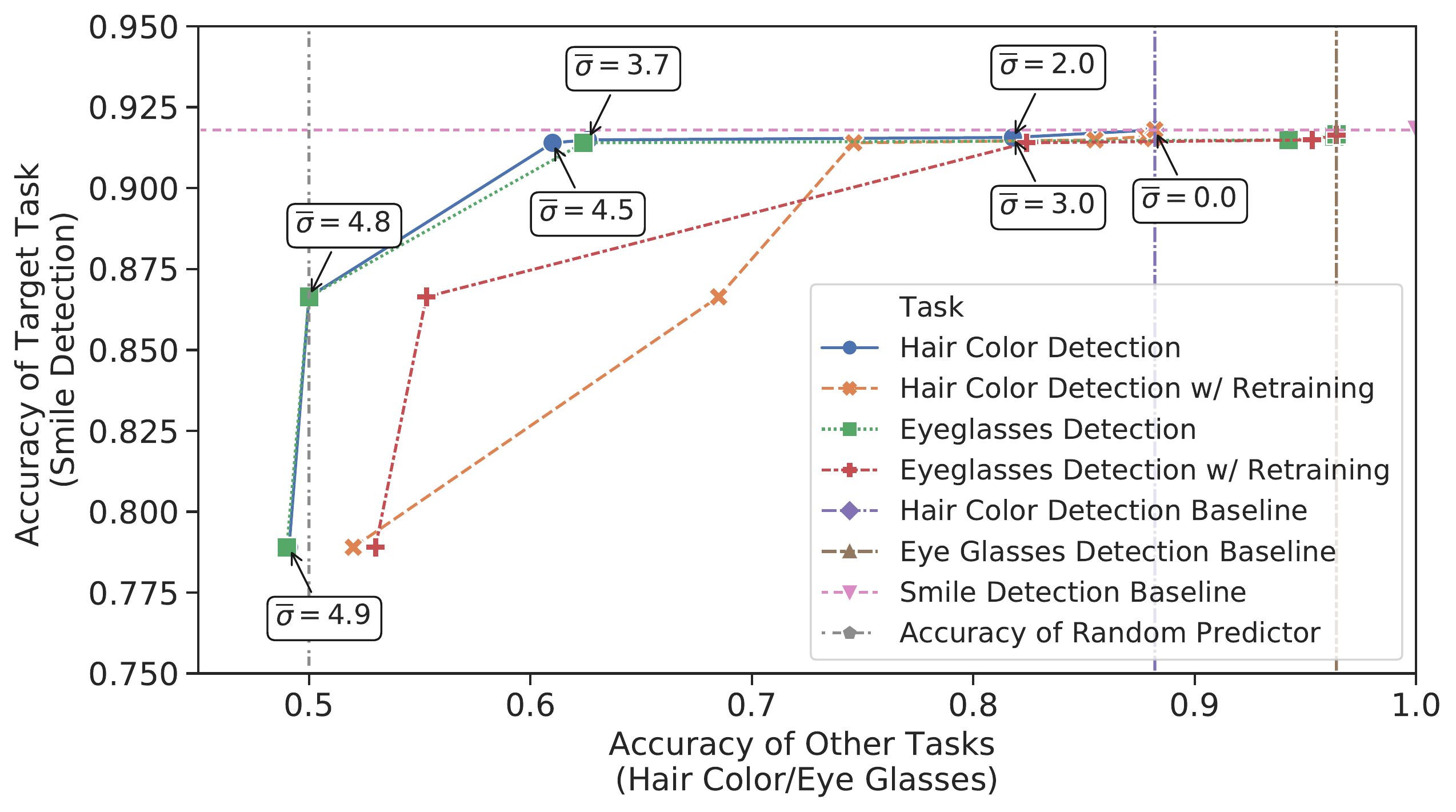}
    \caption{\sieve's protection for target task of smile detection (CelebA dataset) against adversaries that try to infer black-hair color or wearing of eyeglasses from the sifted representations. }
    \vspace{-0.1ex}
    \label{fig:adversary}
\end{figure}
\begin{figure}
    \centering
    \includegraphics[width=0.99\linewidth]{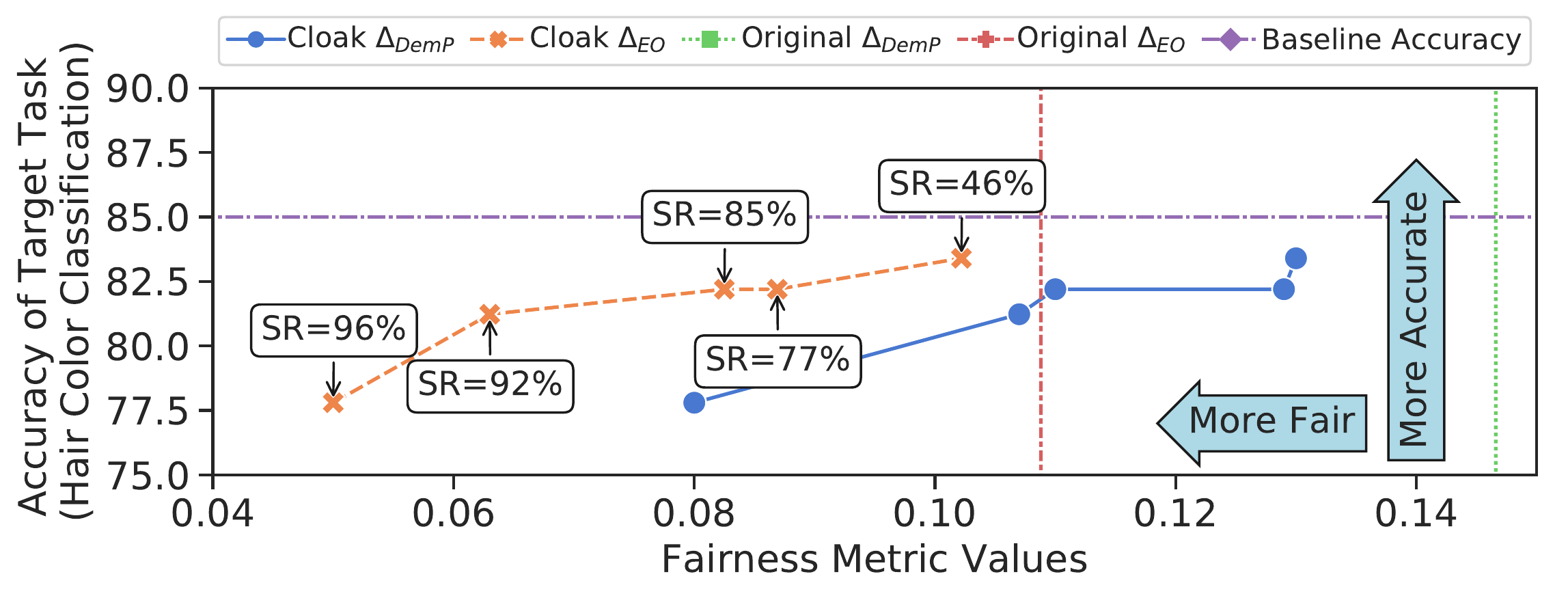}
    \caption{Effects of \sieve on fairness}
    \vspace{-0.1ex}
   \label{fig:fairness}
\end{figure}
\begin{figure}
    \centering
   \includegraphics[width=0.8\linewidth]{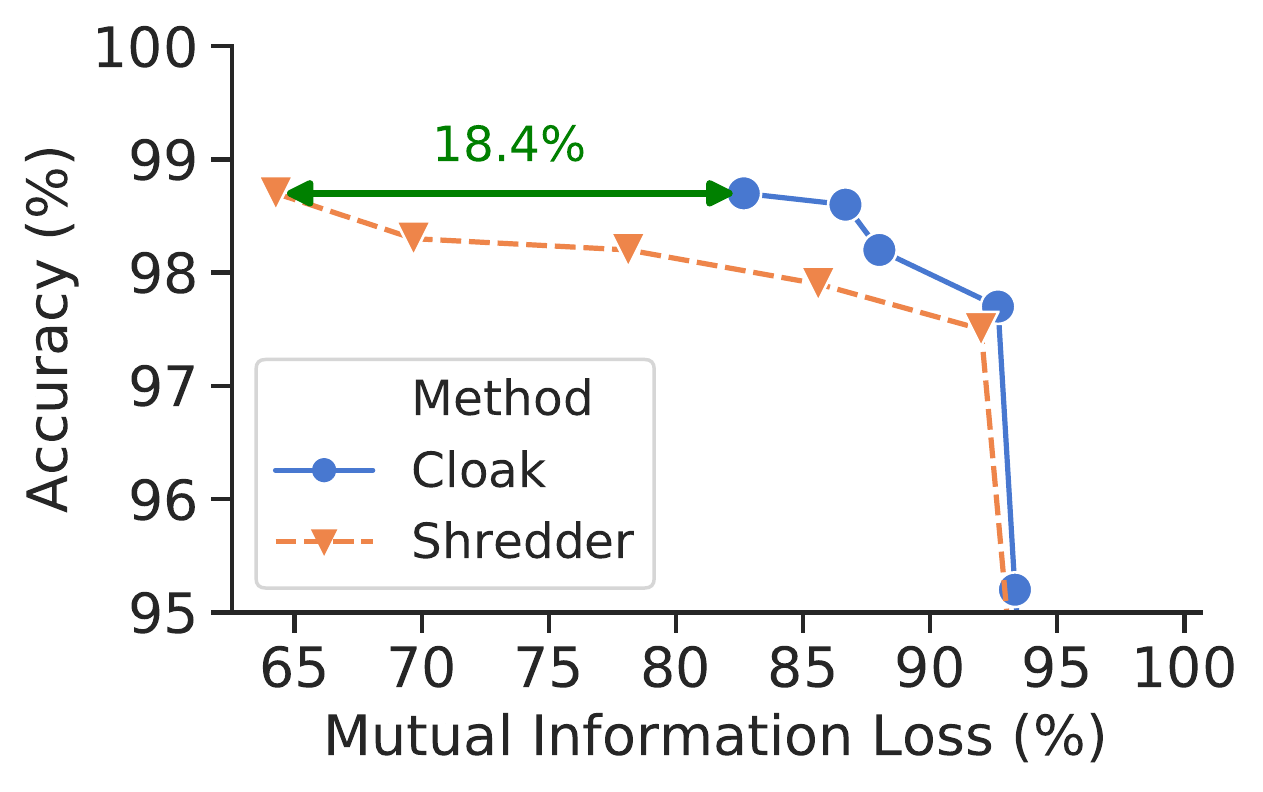}
    \caption{Comparison to Shredder~\cite{shredder}}
    \vspace{-0.1ex}
   \label{fig:shredder}
\end{figure}


\subsection{Black-Box Access Mode}\label{sec:black-box}
To show the applicability of \sieve, we show that it is possible for \sieve to protect users' privacy even when we have limited access to the target model. 
We consider a black-box setting in which we assume \sieve does not have any knowledge of the target model architecture or its parameters and is only allowed to send requests and get back responses. 
In this setting, we first train a substitute model that helps us to train \sieve's representations. 
Note that training a substitute model for black-box setting is a well-established practice in the context of adversarial examples~\cite{papernot2017practical, lu2017safetynet} and inference attacks~\cite{attack1,memgaurd}.
The main challenge is generating the training data needed for training the substitute model. 
However, that has been already addressed in previous work and we follow a similar methodology to the methodology described in Shokri et al.~\cite{attack1}.  
We divide the original dataset (CelebA) into two equal-size disjoint training sets, one for the target and the other for the substitute model.
%
%
%
%
%
%
%
%
%
We assume a target service provider that has two ResNet18~\cite{resnet} DNNs deployed, one for the task of black hair color classification, and one for smile detection. 
Since we assume no knowledge of the model architecture, \sieve substitutes the target classifiers with another architecture, i.e, with two VGG-16 DNNs. 
%
%
\sieve substitute models for the hair and smile tasks have accuracies of $84.9\%$ and $90.9\%$ and the target models have accuracies of $87.3\%$ and $91.8\%$. 
After training the substitute model, we apply \sieve to them to find noise maps and suppressed representations.

Figure~\ref{fig:blackbox-smile} and ~\ref{fig:blackbox-hair} show the results for these experiments. 
\sieve performs similarly effective in both white-box and black-box settings and for both hair color classification and smile detection tasks.
The reason is that the DNN classifiers of the same task are known to learn similar patterns and decision boundaries~\cite{papernot2017practical, memr1}.
For the smile detection, we can see that with suppression ratio of $33\%$, The \sieve black-box generated representations can get prediction accuracy of $91.3\%$, even higher than the baseline prediction accuracy of the classifier it is produced from. That is because the generated representations are fed to the target classifier, which has a higher baseline accuracy than the substitute model.

\subsection{Post-hoc Effects of \sieve on Fairness}
\sieve, by removing extra features, not only  benefits privacy but can also remove unintended biases of the classifier, resulting in a more fair classification.
In many cases the features that bias the classifiers highly overlap with the non-conducive features that \sieve discovers. 
Therefore, applying \sieve can result in predictions that are more fair, without the need to change the classifier.
This subsection evaluates this positive side-effect of \sieve by adopting a setup similar to that of Kairouz et al.~\cite{kairouz2019censored}.
%
%
%
We measure the fairness of the black-hair color classifier using the sifted representations, while considering gender to be a sensitive variable that can cause bias.
%
%
We use two metrics for our experiments, the difference in Demographic Parity ($\Delta_{DemP}$), and the difference in Equal Opportunity ($\Delta_{EO}$). More details on the metrics and the measurements can be found in the supplementary material.
%
%
Figure~\ref{fig:fairness} shows that as \sieve suppresses more non-conducive features, the fairness metrics improve significantly. 
We see $0.05$ reduction in both metrics due to the removal of gender related non-conducive features. 
%
%
%
It is noteworthy that the biasing features in the hair color classifier are not necessarily the gender features shown in Figure~\ref{fig:patterns}.
Those features show what a gender classifier uses to make its decision.

\begin{figure*}
    \centering
    \begin{subfigure}{0.24\textwidth}
        \includegraphics[width=\linewidth]{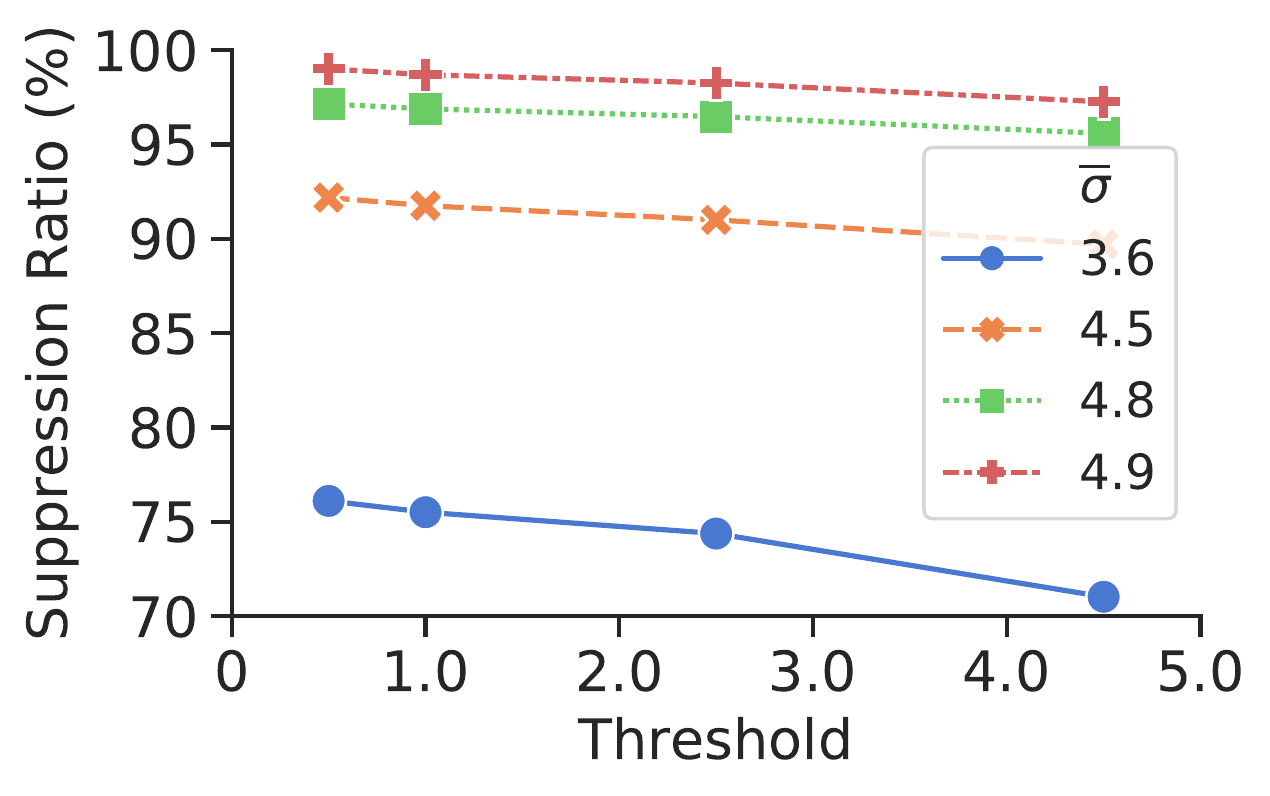}
        \caption{Threshold Sensitivity}
        \label{fig:threshold}
        \end{subfigure}
    \begin{subfigure}{0.24\textwidth}
        \includegraphics[width=\linewidth]{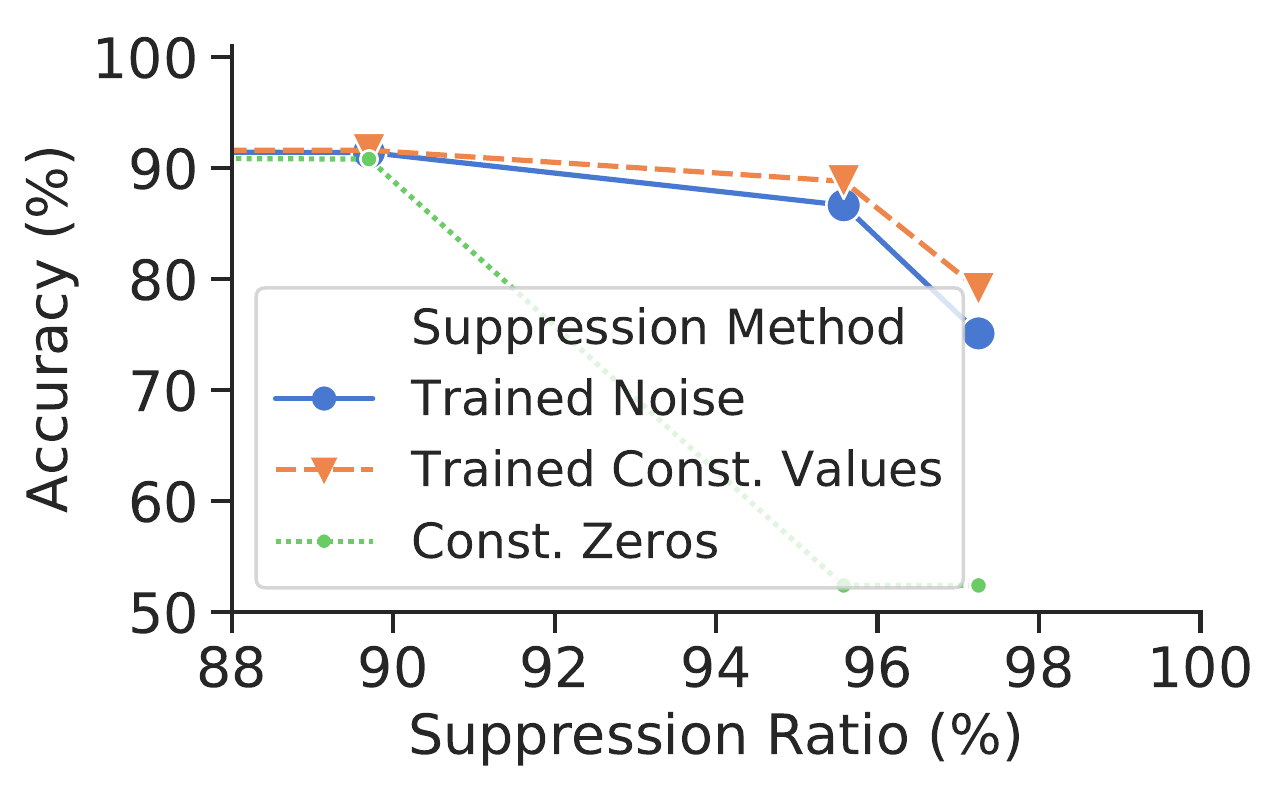}
        \caption{Suppression Schemes}
        \label{fig:suppression}
    \end{subfigure}
        \begin{subfigure}{0.24\textwidth}
        \includegraphics[width=\linewidth]{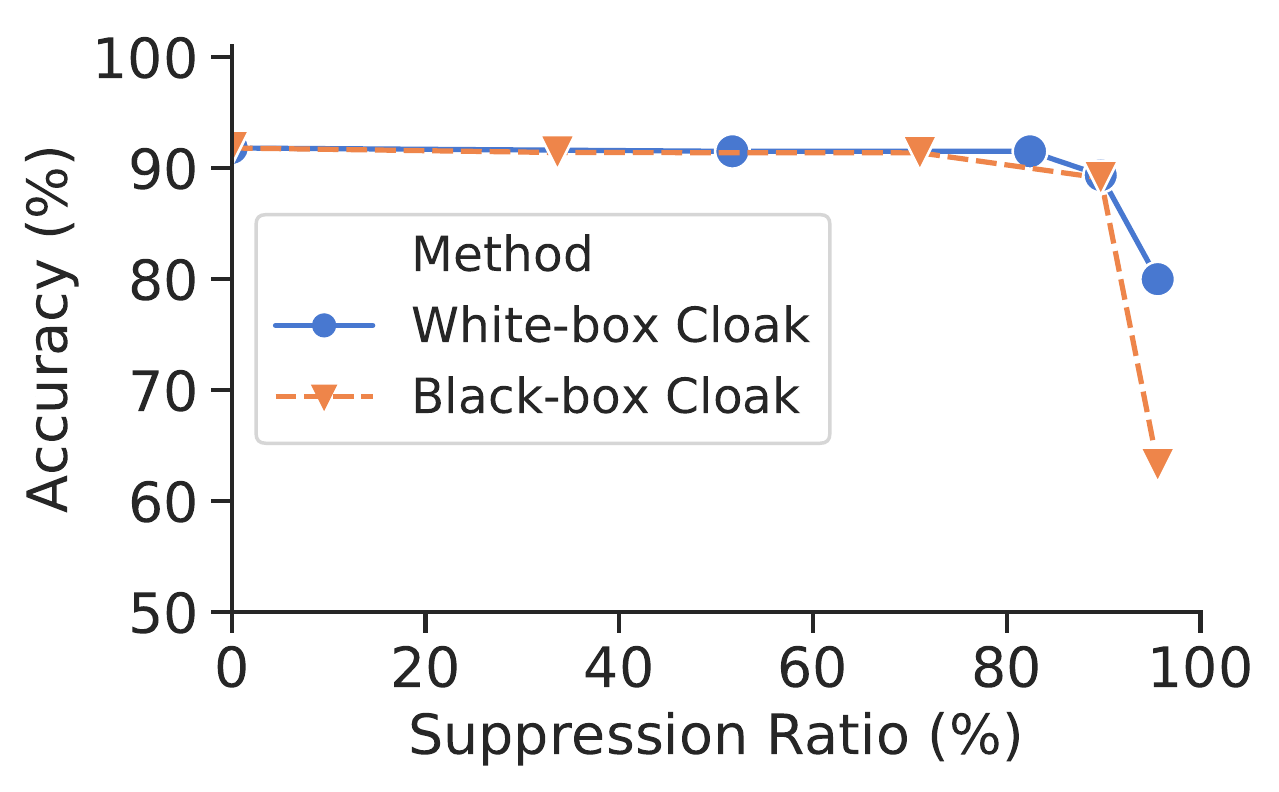}
        \caption{Black-box-smile detection}
        \label{fig:blackbox-smile}
        \end{subfigure}
        \begin{subfigure}{0.24\textwidth}
        \includegraphics[width=\linewidth]{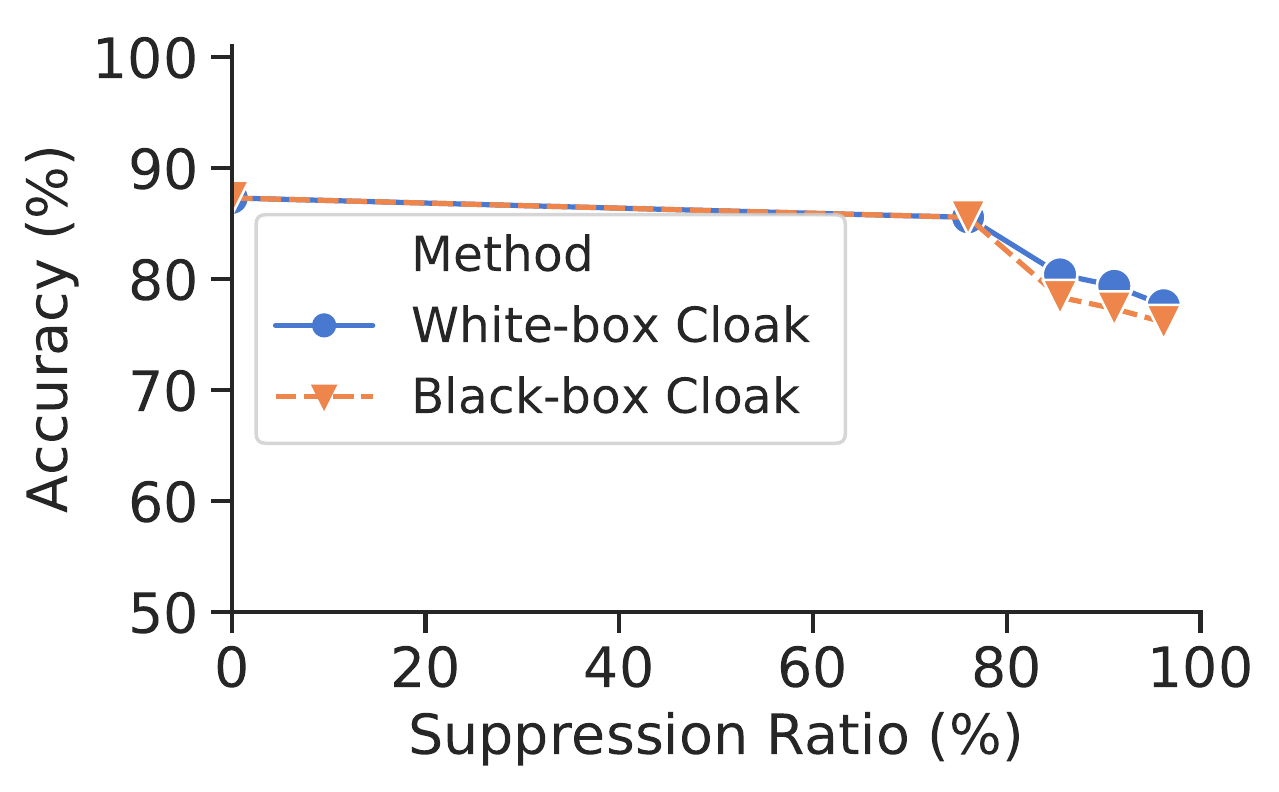}
        \caption{Black-box-hair color}
        \label{fig:blackbox-hair}
        \end{subfigure}
        
    \caption{ (\subref{fig:threshold}) shows the effect of different thresholds on suppression rate. (\subref{fig:suppression}) compares different suppression methods. (\subref{fig:blackbox-smile}) and (\subref{fig:blackbox-hair}) show performance of \sieve in a black-box setting.}
\end{figure*}

\subsection{Thresholds, Suppression Mechanisms, and Comparison to Shredder}
\label{sec:threshold}
\textbf{Sensitivity to threshold values.} Figure~\ref{fig:threshold} shows the effect of  different thresholds ($T$) values on suppression ratio of features on smile detection (on CelebA/ VGG-16). 
%
Different series show different noise maps attained with different values of $\lambda$.
$\bar{\sigma}$ denotes the average standard deviation of a noise map, and the parameter $M$ (maximum standard deviation) of Section~\ref{sec:constraint} is set to 5. 
The figure shows that the choice of $T$ is not critical and in fact is a simple task, since it has little effect on the subset of features that get suppressed. 
This is because during the training of perturbation parameters, the standard deviations are pushed to the either sides of the spectrum (close to $0$ or close to $M$). 

\textbf{Different suppression schemes.} Figure~\ref{fig:suppression} shows the accuracy of three suppression schemes described in Section~\ref{sec:suppress} on the smile detection task (on CelebA/ VGG-16). 
Among different schemes, suppression using the trained values yields better accuracy for the same suppression ratio, since it captures what the classifier expects to receives.
%
Suppression with noise (sending noisy representations) performs slightly worse than training, and that is mainly due to the uncertainty brought by the noise.

%
%

%
%

\textbf{Comparision to Shredder.} Figure~\ref{fig:shredder} compares \sieve and Shredder~\cite{shredder} on the MNIST dataset using LeNet for the target task of digit classification. To create a fair setup, we deploy \sieve to the output of the last convolution layer of LeNet, similar to Shredder. 
\sieve achieves a significantly higher accuracy for same levels of MI loss, which shows the effectiveness of \sieve, in the intermediate representation space. 
For the initial point where there is almost no loss in accuracy, \sieve achieves $18.4\%$ more information loss. 
This better performance is partly due to directly learning the importance of each feature, as opposed to generating patterns similar to a collection that yields high accuracy.
It is also partly due to the extra step that \sieve takes at learning the constant suppression values, which ensures the generated representations are in the domain of the classifier.

\section{Related Work}
\label{sec:related}

This section reviews related work on the privacy of web services. The section first briefly discusses the privacy of web applications in general, and then more thoroughly discusses privacy in the context of machine learning. 

\subsection{Web-application Privacy}

Despite the privacy issues, sharing personal content on the web unfortunately is still common. Therefore, researchers lavished attention on the research that makes such sharing safe, secure, and private~\cite{more1, more2}. Mannan et al.~\cite{2008priv} proposed a method that focuses on privacy-enhanced web content sharing in any user-chosen web server. 
There is also a body of work that conducts longitudinal studies on deleted web content and their subsequent information leakage~\cite{soups, del1}. 
The research in this area focuses on data leakage through social media~\cite{2009www, sajadmanesh2017kissing}, blogging services that publish information~\cite{2019ccsasia}, or aggregation of web data~\cite{2019www}. 
\cloak, however, focuses on an inference-as-a-service setup where private queries that potentially contain sensitive information are sent to a web-service to run machine learning inference.

\subsection{Machine Learning Privacy}
Privacy-preserving machine learning research can be broadly categorized based on the phase on which they focus, i.e., training vs prediction.
The majority of these studies fall under the training category~\cite{survey} where they try to protect contributors' private data from getting embedded in the trained ML model~\cite{shokriDNN,abadiDNN,DPPCA,chaudhuriERM,chaudhuriPCA, pate, Lim2018LearningLE} or from being published in public datasets~\cite{dwork06tcc,dwork06euro,dwork14book}.
However, the impending importance of prediction (inference) privacy has led to the emergence of recent research efforts in this direction~\cite{osia2,osia1,notjust,dowlin16icml,leroux2018privacy, he2019model}.
There is also a smaller body of work focused on the privacy of model architecture and parameters~\cite{carlini2020cryptanalytic,krishna2019thieves}, which is out of the scope of this paper.
%
%
Below, the more related works are discussed in more detail.

\textbf{Training phase.} For \emph{training}, the literature abounds with studies that use noise addition as a randomization mechanism to protect privacy~\cite{chaudhuriERM, chaudhuriPCA,dwork14book, abadiDNN,shokriDNN,pate,Papernot2018ScalablePL}. 
Most notably, differential privacy~\cite{dwork06tcc}, a mathematical framework that quantifies privacy, has spawned a vast body of research in noise-adding mechanisms. 
For instance, it has been applied to many machine learning algorithms, such as logistic regression~\cite{chaudhuriLogistic}, statistical risk minimization~\cite{chaudhuriERM}, principal component analysis~\cite{DPPCA,chaudhuriPCA}, and deep learning~\cite{shokriDNN,abadiDNN, pate, Papernot2018ScalablePL, sajadmanesh2020differential}, to name a few. 
Many of these studies have applied differential privacy to a training setting where they are concerned with leaking private information in training set through the machine learning model.  
There is also a body of work focused on secure training of machine learning models using cryptographic protocols~\cite{ secureml, agrawal2019quotient, ryffel2020ariann, ryffel2019partially, agrawal2019quotient, hashemi2020darknight}.

Finally, there are also several privacy-enhancing mechanisms, such as Federated learning~\cite{liu2019enhancing, flsurvey} and Split learning~\cite{Poirot2019SplitLF, Singh2019DetailedCO}, which use gradients or abstract representations of data in lieu of raw inputs, to train ML models and enhance privacy. These methods have been coupled with differential privacy~\cite{balle2020privacy, ramaswamy2020training, fl-secureagg} or information-theoretic notions~\cite{Vepakomma2020NoPeekIL} to provide meaningful privacy guarantees.

\textbf{Prediction/Inference privacy.} Only a handful of studies have addressed privacy of prediction by adding noise to the data. 
Osia et al.~\cite{osia1} employed dimensionality reduction techniques to reduce the amount of information before sending it to an untrusted cloud service. 
Wang et al.~\cite{notjust} propose a noise injection framework that randomly nullifies input elements for private inference, but their method requires retraining of the entire network. 
Leroux et al.~\cite{leroux2018privacy} propose an autoencoder to randomize the data, but the intensity of their obfuscation is too small to be irreversible, as they state.

Liu et al.~\cite{Liu2017DEEProtectEI} propose DEEProtect, an information-theoretic method which offers two usage modes for protecting privacy. One where it assumes no access to the privacy-sensitive inference labels and one where it assumes access to the privacy-sensitive labels.
Deeprotect incorporates the sensitive inference into its formulation for the latter usage mode.
%
%
%
A more recent work, Shredder~\cite{shredder}, proposes to \emph{heuristically} sample and reorder additive noise at run time based on the previously collected additive tensors that the DNN can tolerate (anti-adversarial patterns).
%
%
In contrast, \sieve's approach is to directly reduce information by learning conducive features and suppressing non-conducive ones with learned constant values. 
%
%
We also experimentally show that \sieve outperforms this prior work.
More importantly, this prior work relies on executing parts of the network on the edge side and sending the results to the cloud.
However, this separation is not always possible, as the service providers might not be willing to share the model parameters or change their infrastructure to accommodate for this method. Also, in some cases, the edge device might be incapable of running the first convolution layers of the neural network.
In contrast, we show that \sieve can perform equally efficiently in black-box settings without the collaboration of the service provider. 
%
%


Privacy on offloaded computation can also be provided by the means of cryptographic tools such as homomorphic encryption and/or Secure Multiparty Computation (SMC)~\cite{gazelle, dowlin16icml, minionn, delphi, falcon, feng2020cryptogru, lou2020autoprivacy, yalame}. 
%
%
However, these approaches suffer from a prohibitive computational cost (Table~\ref{table:crypto}), on both the cloud and user side, exacerbating the complexity and compute-intensity of neural networks especially on resource-constrained edge devices. 
\cloak, in contrast, avoids the significant cost of encryption and homomorphic data processing.

Several other research~\cite{tramer2018slalom,mlcapsule, murali} rely on trusted execution environments to remotely run ML algorithms. 
However, this model requires the users to send their data to an enclave running on remote servers and is vulnerable to the new breaches in hardware~\cite{spectre,meltdown}.
\vspace{-2ex}
\section{Conclusion}

The surge in the use of machine learning is driven by the growth in data and 
compute power. 
The data mostly comes from people~\cite{nytimes2} and includes an abundance of private information. 
We propose \cloak, a mechanism that finds features in the data that are unimportant and non-conducive for a cloud ML prediction model. 
This enables \sieve to suppress those features before sending them to the cloud, providing only the minimum information exposure
necessary to receive the particular service.
In doing so, \sieve not only minimizes the impact on the utility of the service, but it also imposes minimal overhead on the response time of the prediction service.

\vspace{-2ex}
\begin{acks}

We thank the anonymous reviewers for their insightful suggestions and comments.
This work was in part supported by National Science Foundation (NSF) awards CNS\#1703812, ECCS\#1609823, CCF\#1553192, CCF\#1823444,
Air Force Office of Scientific Research Young Investigator Program (YIP) award \#FA9550-17-1-0274,  
National Institute of Health (NIH) award \#R01EB028350, and
%
Air Force Research Laboratory (AFRL) and Defense Advanced Research Project Agency (DARPA) under agreement number \#FA8650-20-2-7009 and \#HR0011-18-C-0020.
The U.S. Government is authorized to reproduce and distribute reprints for Governmental purposes notwithstanding any copyright notation thereon.
The views and conclusions contained herein are those of the authors and should not be interpreted as necessarily representing the official policies or endorsements, either expressed or implied, of Samsung, Amazon, NSF, AFSOR, NIH, AFRL, DARPA, or the U.S. Government.
\end{acks}
\bibliographystyle{ACM-Reference-Format}
\bibliography{acmart}

\appendix



\section{Appendix}

\subsection{Theorem for Upper bound on $I(\mathbf{x_c};\mathbf{u})$}
\begin{thm} \label{thm:gaussian-max}
Given a random vector $\mathbf{x}\in R^n$ with covariance matrix $\mathbf{K}$, then:
\begin{equation}\label{eq:upper:2}
    \mathcal{H}(\mathbf{x})\leq \frac{1}{2}\log((2\pi e)^n|\mathbf{K}|)
\end{equation}
\end{thm}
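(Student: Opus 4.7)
The plan is to invoke the classical maximum-entropy property of the Gaussian: among all continuous distributions on $\mathbb{R}^n$ with a fixed covariance matrix $\mathbf{K}$, the multivariate Gaussian achieves the largest differential entropy, and that maximum equals $\tfrac{1}{2}\log((2\pi e)^n|\mathbf{K}|)$. The cleanest route is via the non-negativity of the Kullback--Leibler divergence.

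Concretely, I would proceed as follows. Let $p$ denote the density of $\mathbf{x}$ (assume absolute continuity; otherwise the inequality is trivial since $\mathcal{H}(\mathbf{x})=-\infty$), with mean $\boldsymbol{\mu}$ and covariance $\mathbf{K}$. Let $\phi$ be the density of a Gaussian random vector $\mathbf{g}\sim\mathcal{N}(\boldsymbol{\mu},\mathbf{K})$, so that
\[
\log\phi(\mathbf{z}) = -\tfrac{1}{2}\log((2\pi)^n|\mathbf{K}|) - \tfrac{1}{2}(\mathbf{z}-\boldsymbol{\mu})^{T}\mathbf{K}^{-1}(\mathbf{z}-\boldsymbol{\mu}).
\]
From $D(p\,\|\,\phi)\geq 0$ I get $\mathcal{H}(\mathbf{x}) = -\mathbb{E}_p[\log p] \leq -\mathbb{E}_p[\log\phi]$. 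The key observation is that $-\log\phi$ is a (constant plus) quadratic form in $\mathbf{z}$, so its expectation under $p$ depends on $p$ only through its mean and covariance. Since $p$ and $\phi$ share those first two moments, $\mathbb{E}_p[(\mathbf{x}-\boldsymbol{\mu})^{T}\mathbf{K}^{-1}(\mathbf{x}-\boldsymbol{\mu})] = \mathrm{tr}(\mathbf{K}^{-1}\mathbf{K}) = n$, exactly matching the corresponding Gaussian expectation. Therefore $-\mathbb{E}_p[\log\phi] = \mathcal{H}(\mathbf{g}) = \tfrac{1}{2}\log((2\pi e)^n|\mathbf{K}|)$, and the desired bound follows.

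There is no real obstacle here beyond tidiness: the argument is a short three-line consequence of Gibbs' inequality once one notices that $\log\phi$ is quadratic. The only subtleties worth flagging in the writeup are (i) the matching-moment step, which is what lets $\mathbb{E}_p[\log\phi]$ collapse to $\mathbb{E}_\phi[\log\phi]$, and (ii) the mild regularity assumptions (existence of a density and finite covariance) that make $D(p\,\|\,\phi)$ and $\mathcal{H}(\mathbf{x})$ well-defined; if $\mathbf{K}$ is singular the statement is interpreted on the affine support of $\mathbf{x}$, or alternatively the right-hand side is $-\infty$ and the bound holds vacuously.
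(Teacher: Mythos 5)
Your proposal is correct and follows exactly the route the paper indicates: the paper's ``proof'' simply cites the non-negativity of KL-divergence and defers to Theorem 8.6.5 of Cover and Thomas, which is precisely the Gibbs-inequality argument you write out (including the moment-matching step that makes $\mathbb{E}_p[\log\phi]=\mathbb{E}_\phi[\log\phi]$). You have merely supplied the details the paper outsources to the reference, so there is nothing to reconcile.
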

\begin{proof}
This theorem is proved using the fact that the KL-divergence of two distributions is always positive. The complete proof is in~\cite{elements-book}, Theorem 8.6.5. 
\end{proof}
\subsection{Lower bound on $I(\mathbf{x_c};\mathbf{c})$}

First, we introduce a lemma~\cite{osia1} that we use for finding the lower bound of $I(\mathbf{x_c}; \mathbf{c})$. 

\begin{lemma}
For any arbitrary conditional distribution $q(\mathbf{c}|\mathbf{x_c})$, we have:
\begin{equation}\label{eq:lemma:final}
     \mathbb{E}_{\mathbf{x_c}, \mathbf{c}}[\log\frac{q(\mathbf{c}|\mathbf{x_c})}{p(\mathbf{c})}]
     \leq I(\mathbf{x_c};\mathbf{c})
\end{equation}
\end{lemma}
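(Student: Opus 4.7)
The plan is to reduce the inequality to the non-negativity of the Kullback--Leibler divergence, which is the standard route to the Barber--Agakov variational lower bound on mutual information. Specifically, I would expand the true mutual information using the actual posterior $p(\mathbf{c}|\mathbf{x_c})$ and then compare it term by term to the variational expression on the left-hand side, isolating the discrepancy as a KL divergence between $p(\mathbf{c}|\mathbf{x_c})$ and the surrogate $q(\mathbf{c}|\mathbf{x_c})$.

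First, I would write
\begin{equation*}
I(\mathbf{x_c};\mathbf{c}) \;=\; \mathbb{E}_{\mathbf{x_c},\mathbf{c}}\!\left[\log \frac{p(\mathbf{c}|\mathbf{x_c})}{p(\mathbf{c})}\right],
\end{equation*}
which is the standard definition written in terms of the posterior and marginal. Next, I would form the gap
\begin{equation*}
I(\mathbf{x_c};\mathbf{c}) - \mathbb{E}_{\mathbf{x_c},\mathbf{c}}\!\left[\log \frac{q(\mathbf{c}|\mathbf{x_c})}{p(\mathbf{c})}\right] = \mathbb{E}_{\mathbf{x_c},\mathbf{c}}\!\left[\log \frac{p(\mathbf{c}|\mathbf{x_c})}{q(\mathbf{c}|\mathbf{x_c})}\right],
\end{equation*}
so that the $p(\mathbf{c})$ terms cancel algebraically. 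Then I would rewrite the outer expectation as an iterated expectation over $\mathbf{x_c}$ followed by the conditional expectation over $\mathbf{c}\mid\mathbf{x_c}$, which turns the right-hand side into $\mathbb{E}_{\mathbf{x_c}}\bigl[\mathrm{KL}\bigl(p(\mathbf{c}|\mathbf{x_c})\,\|\,q(\mathbf{c}|\mathbf{x_c})\bigr)\bigr]$.

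Finally, appealing to Gibbs' inequality (the non-negativity of the KL divergence pointwise in $\mathbf{x_c}$), the expected KL is $\geq 0$, which gives exactly the desired bound. The only minor technical points are the usual ones: we need $q(\mathbf{c}|\mathbf{x_c})$ to be absolutely continuous with respect to $p(\mathbf{c}|\mathbf{x_c})$ so that the log-ratio is well defined almost everywhere, and we need Fubini to justify swapping the order of integration when conditioning on $\mathbf{x_c}$; both hold under mild regularity assumptions on the joint distribution, and the paper implicitly assumes these since $p$ and $q$ are taken to be proper density functions.

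I do not anticipate any real obstacle here: the statement is precisely the variational lower bound of Barber and Agakov, and the proof is a one-line manipulation once the mutual information is written with the true posterior and the difference is recognized as an expected KL divergence. The entire argument is essentially a change of perspective rather than a computation, so the challenge is purely expository --- making sure the reader sees that all dependence on the marginal $p(\mathbf{c})$ cancels and what is left is manifestly non-negative.
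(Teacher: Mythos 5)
Your proposal is correct and follows essentially the same route as the paper: both arguments reduce the inequality to the non-negativity of the Kullback--Leibler divergence between the true posterior $p(\mathbf{c}|\mathbf{x_c})$ and the surrogate $q(\mathbf{c}|\mathbf{x_c})$, with the paper starting from $D_{KL}\geq 0$ and adding the definition of $I(\mathbf{x_c};\mathbf{c})$, while you start from the definition of $I(\mathbf{x_c};\mathbf{c})$ and isolate the expected KL as the gap. The two derivations are the same manipulation run in opposite directions, so no substantive difference remains.
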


\begin{proof}

Since we know that KL-divergence is always non-negative, we can write: 
\begin{equation*}
    D_{KL}(p(\mathbf{c}|\mathbf{x_c})||q(\mathbf{c}|\mathbf{x_c}))=  \int p(\mathbf{c}|\mathbf{x_c})\;\log{\frac{p(\mathbf{c}|\mathbf{x_c})}{q(\mathbf{c}|\mathbf{x_c})}}\; d\mathbf{c} \geq 0
\end{equation*}

From this, we can come to:
\begin{equation*} 
   \int p(\mathbf{c},\mathbf{x_c})\;\log{\frac{p(\mathbf{c}|\mathbf{x_c})p(\mathbf{c})}{q(\mathbf{c}|\mathbf{x_c})p(\mathbf{c})}}\;d\mathbf{c}\;d\mathbf{x_c} \geq 0  
\end{equation*}

By negation, we get:
\begin{equation}\label{eq:lemma1:pt1}
       -\int p(\mathbf{c},\mathbf{x_c})\;\log{\frac{p(\mathbf{c}|\mathbf{x_c})p(\mathbf{c})}{q(\mathbf{c}|\mathbf{x_c})p(\mathbf{c})}}\;d\mathbf{c}\;d\mathbf{x_c} \leq 0 
\end{equation}

On the other hand, from the definition of mutual information, we can write:

\begin{equation}\label{eq:lemma1:pt2}
    I(\mathbf{x_c};\mathbf{c}) = \int p(\mathbf{c},\mathbf{x_c})\;\log{\frac{p(\mathbf{c},\mathbf{x_c})}{p(\mathbf{c})p(\mathbf{x_c})}}\;d\mathbf{x_c}\;d\mathbf{c} 
\end{equation}

If we add  $I(\mathbf{x_c};\mathbf{c})$ from Equation~\ref{eq:lemma1:pt2} to ~\ref{eq:lemma1:pt1}, we get:

\begin{equation*}
    \int p(\mathbf{x_c}, \mathbf{c})\;\log\frac{q(\mathbf{c}|\mathbf{x_c})}{p(\mathbf{c})} \leq I(\mathbf{x_c};\mathbf{c})
\end{equation*}

Which yields:
\begin{equation}
     \mathbb{E}_{\mathbf{x_c}, \mathbf{c}}[\log\frac{q(\mathbf{c}|\mathbf{x_c})}{p(\mathbf{c})}]
     \leq I(\mathbf{x_c};\mathbf{c})
\end{equation}
\end{proof}


Now, we review the theorem and prove it. 

\textbf{Theorem 3.2.}\textit{
The lower bound on} $I(\mathbf{x_c};\mathbf{c})$ is:
\begin{equation}  
    \mathcal{H}(\mathbf{c}) + \max\limits_{q}\  \mathbb{E}_{\mathbf{x_c}, \mathbf{c}}[\log\; q(\mathbf{c}|\mathbf{x_c})]
\end{equation}
\textit{By $q$, we mean all members of a possible family of distributions for this conditional probability.}

\begin{proof}
For all $q$, the left hand side of equation~\ref{eq:lemma:final} offers a lower bound. The equality happens when $q(\mathbf{c}|\mathbf{x_c})$ is equal to $p(\mathbf{c}|\mathbf{x_c})$. Given this, if we estimate a close enough distribution $q$ that maximizes the left hand side of the inequality~\ref{eq:lemma:final}, we can find a tight lower bound for the mutual information. We can re-write inequality~\ref{eq:lemma:final} as:
\begin{equation*}
      - \mathbb{E}_{\mathbf{c}}[\log\ p(\mathbf{c})] +\mathbb{E}_{\mathbf{x_c}, \mathbf{c}}[\log\ q(\mathbf{c}|\mathbf{x_c})]
     \leq I(\mathbf{x_c};\mathbf{c})
\end{equation*}

Based on the definition of Entropy and the discussion above about tightening the bound, the lower bound on the mutual information is:
\begin{equation}
    \mathcal{H}(\mathbf{c}) + \max\limits_{q}\  \mathbb{E}_{\mathbf{x_c}, \mathbf{c}}[\log\; q(\mathbf{c}|\mathbf{x_c})]
\end{equation}

\end{proof}

%
%
%
%
%
%
%
%
%
%
%
%



\subsection{Hyperparameters for Training} \label{sec:hyper}

Tables~\ref{tab:hp1}, ~\ref{tab:hp2} and ~\ref{tab:hp3} show the hyperparameters used for training in the experiments of Sections~\ref{sec:res2},~\ref{sec:adv} and~\ref{sec:black-box}. 
For the first one, the $Point\#$ indicates the experiment that produced the given point in the graph, if the points were numbered from left to right. 
The hyperparameters of the rest of the experiments are the same as the ones brought. 
In our implementation, for ease of use and without loss of generality, we have introduced a variable $\gamma$ to the loss function in Equation~\ref{eq:loss2}, in a way that $\gamma=\frac{1}{\lambda}$. With this introduction, we do not directly assign a $\lambda$ (as if $\lambda$ were removed and replaced by $\gamma$ as a coefficient of the other term). 
In the tables, we have used lambda to be consistent, and in the cells where the value for $\lambda$ is not given, it means that the loss is only cross-entropy. But in the Code, the coefficient is set on the other term and is $1/\lambda$s reported here.  The batch sizes used for training are 128 for CIFAR-100, MNIST, and UTKFace and 40 and 30 for CelebA. For testing the batch size is 1, so as to sample a new noise tensor for each image and capture the stochasticity. Also, the number of samples taken for each update in optimization is 1 since we do mini-batch training and for each mini-batch we take a new sample. Finally, $M$ is set to $1.5$ for all benchmarks, except for CelebA where it is set to be $5$. 

\begin{table*}[t]
    \centering
    \caption{hyper parameters for Section~\ref{sec:res2}}\label{tab:hp1}
    \resizebox{0.7\textwidth}{!}{
    \setlength{\tabcolsep}{1pt}
\begin{tabular*}{\textwidth}{@{\extracolsep{\fill}}lllllllllll@{}}
\toprule
\multirow{2}{*}{Model}     & \multirow{2}{*}{Point\#} & \multicolumn{3}{c}{Training Phase 1} & \multicolumn{3}{c}{Training Phase 2} \\\cmidrule{3-5} \cmidrule{6-8}
                           &                          & epoch     & LR        & $\lambda$    & epoch     & LR        & $\lambda$    \\\midrule
\multirow{5}{*}{CIFAR-100} & 1                        & 17        & 0.001     & 1            & 3         & 0.001     & 10           \\
                           & 2                        & 24        & 0.001     & 1            & 2         & 0.001     & 10           \\
                           & 3                        & 30        & 0.001     & 1            & 2         & 0.001     & 10           \\
                           & 4                        & 40        & 0.001     & 0.2          & 2         & 0.001     & 10           \\
                           & 5                        & 140       & 0.001     & 0.2          & 2         & 0.001     & 10           \\\midrule
\multirow{5}{*}{MNIST}     & 1                        & 50        & 0.01      & 100          & 90        & 0.001     & 200          \\
                           & 2                        & 50        & 0.01      & 100          & 160       & 0.001     & 200          \\
                           & 3                        & 50        & 0.01      & 100          & 180       & 0.001     & 200          \\
                           & 4                        & 50        & 0.01      & 100          & 260       & 0.001     & 100          \\
                           & 5                        & 50        & 0.01      & 100          & 290       & 0.001     & 100          \\\midrule
\multirow{5}{*}{UTKFace}   & 1                        & 6         & 0.01      & 0.1          & 6         & 0.0001    & 100          \\
                           & 2                        & 4         & 0.01      & 0.1          & 2         & 0.0001    & 100          \\
                           & 3                        & 8         & 0.01      & 0.1          & 2         & 0.0001    & 100          \\
                           & 4                        & 10        & 0.01      & 0.1          & 2         & 0.0001    & 100          \\
                           & 5                        & 12        & 0.01      & 0.1          & 2         & 0.0001    & 100       \\\bottomrule  
\end{tabular*}

    }
\end{table*}

\begin{table*}[t]
    \centering
    \caption{hyper parameters for Section~\ref{sec:black-box}}\label{tab:hp2}
    \resizebox{0.7\textwidth}{!}{
    \setlength{\tabcolsep}{1pt}
\begin{tabular*}{\textwidth}{@{\extracolsep{\fill}}lllllllllll@{}}
\toprule
\multirow{2}{*}{Model}    & \multirow{2}{*}{Point\#} & \multicolumn{3}{c}{Training Phase 1} & \multicolumn{3}{c}{Training Phase 2} & \multicolumn{3}{c}{Training Phase 3} \\\cmidrule{3-5} \cmidrule{6-8} \cmidrule{9-11}
                          &                          & epoch     & LR       & $\lambda$     & epoch     & LR        & $\lambda$    & epoch     & LR        & $\lambda$    \\\midrule
\multirow{5}{*}{VGG16}    & 1                        & 0.5       & 0.01     & 1             & 0.5       & 0.001     & 1            & -         & -         & -            \\
                          & 2                        & 0.5       & 0.01     & 1             & 0.7       & 0.001     & 1            & -         & -         & -            \\
                          & 3                        & 0.5       & 0.01     & 1             & 0.8       & 0.001     & 1            & -         & -         & -            \\
                          & 4                        & 0.8       & 0.01     & 1             & 0.8       & 0.001     & 1            & 0.2       & 0.001     & 5            \\
                          & 5                        & 1         & 0.01     & 1             & 0.8       & 0.001     & 1            & 0.2       & 0.001     & 100          \\\midrule
\multirow{5}{*}{ResNet18} & 1                        & 1         & 0.01     & 10            & 0.5       & 0.001     & 1            & -         & -         & -            \\
                          & 2                        & 1         & 0.01     & 5             & 0.5       & 0.001     & 1            & -         & -         & -            \\
                          & 3                        & 1         & 0.01     & 5             & 0.7       & 0.001     & 1            & -         & -         & -            \\
                          & 4                        & 1.2       & 0.01     & 3             & 0.5       & 0.001     & 1            & 0.2       & 0.001     & 5            \\
                          & 5                        & 2         & 0.01     & 5             & 0.5       & 0.001     & 1            & 0.2       & 0.001     & 5  \\\bottomrule         
\end{tabular*}
    }
\end{table*}

\begin{table*}[t]
    \centering
    \caption{hyper parameters for Section~\ref{sec:adv}}\label{tab:hp3}
    \resizebox{0.7\textwidth}{!}{
    \setlength{\tabcolsep}{0pt}
\begin{tabular*}{\textwidth}{@{\extracolsep{\fill}}llrrrrrr@{}}
\toprule
\multirow{2}{*}{Model}     & \multirow{2}{*}{SR(\%)} & \multicolumn{2}{c}{Training Phase 1} & \multicolumn{2}{c}{Training Phase 2} & \multicolumn{2}{c}{Training Phase 3} \\\cmidrule{3-4}\cmidrule{5-6}\cmidrule{7-8}
                                   &                     & epoch             & LR               & epoch            & LR                & epoch            & LR                \\\midrule
\multirow{7}{*}{Adversary-hair}    & 00.00                & 1                 & 0.01             & -                & -                 & -                & -                 \\
                                   & 33.60               & 1                 & 0.01             & 2                & 0.0001            & 1                & 0.00001           \\
                                   & 53.70               & 1                 & 0.01             & 2                & 0.0001            & 1                & 0.00001           \\
                                   & 71.00               & 1                 & 0.01             & 2                & 0.0001            & 1                & 0.00001           \\
                                   & 89.70               & 1                 & 0.01             & 2                & 0.0001            & 3                & 0.00001           \\
                                   & 95.60               & 1                 & 0.01             & 2                & 0.0001            & 2                & 0.00001           \\
                                   & 98.30               & 1                 & 0.01             & 2                & 0.0001            & 3                & 0.00001           \\\midrule
\multirow{7}{*}{Adversary-glasses} & 00.00                & 1                 & 0.01             & -                & -                 & -                & -                 \\
                                   & 33.60               & 1                 & 0.01             & 2                & 0.0001            & 1                & 0.00001           \\
                                   & 53.70               & 1                 & 0.01             & 2                & 0.0001            & 1                & 0.00001           \\
                                   & 71.00               & 1                 & 0.01             & 2                & 0.0001            & 1                & 0.00001           \\
                                   & 89.70               & 1                 & 0.01             & 2                & 0.0001            & 3                & 0.00001           \\
                                   & 95.60               & 1                 & 0.01             & 2                & 0.0001            & 2                & 0.00001           \\
                                   & 98.30               & 1                 & 0.01             & 2                & 0.0001            & 3                & 0.00001        \\\bottomrule  
\end{tabular*}
    }
\end{table*}

\subsection{Code Directory Structure}

The code and model checkpoints used to produce the results are provided at \url{https://github.com/mireshghallah/cloak-www-21}. The code is in the directory \texttt{code}  and the models and NumPy files are named \texttt{saved\_nps.zip} and they both have the same directory structure. They each contain 5 Folders named \textit{exp1-trade-off, exp2-adversary, exp3-black-box, exp4-fairness} and \textit{exp5-shredder} which are related to the results in the experiments section in the same order. The pre-trained parameters needed are provided in the  \texttt{saved\_nps.zip}, in the corresponding directory. So, all that is needed to be done is to copy all files from the \texttt{saved\_nps.zip
} directory to their corresponding positions in the code folders, and then run the provided Jupyter notebooks. 
%
%
The notebooks that were used to generate representations are provided, in case someone wants to reproduce the results, and the saved \sieve models and pre-trained models are given as well. 
For acquiring the datasets, you can have a look at the \texttt{acquire\_datasets.ipynb} notebook, included in the \texttt{code.zip}.

In short, each notebook has \sieve in its name will start by loading the required datasets and then creating a model. Then, the model is trained based on the experiments and using the hyperparameters provided in section~\ref{sec:hyper}.
Finally, you can run a test function that is provided to evaluate the model. 
For seeing how the mutual information is estimated, you can run the notebooks that have \texttt{mutual\_info} in their names.
You need not have run the training beforehand if you place the provided \texttt{.npy} files in the correct directories. For the mutual information estimation, you will need to download the ITE toolbox~\cite{itetoolbox}. The link is provided in the code. 

\subsection{Fairness Metrics}
In a classification task, demographic parity requires the conditional probability of the classifier predicting output class $\hat{Y}= y$ given sensitive variable $S = 0$ to be the same as predicting class $\hat{Y} = y$ given $S=1$. In other words, $P(\hat{Y}= y| S=0) = P(\hat{Y} = y| S=1)$. 
Since in most real cases these values are not the same, the maximum pair-wise difference between these values is considered as a measure of fairness, $\Delta_{DemP}$, and the lower this difference, the more fair the classifier. Here $S$ would be the gender, which due to the data provided in the dataset, is binary. 
We have only two target classes of black hair and non-black hair,  so the $\Delta_{DemP}(y=0)$ is the same as $\Delta_{DemP}(y=1)$. 

Equalized odds is another fairness measure, which requires the conditional probability of the classifier predicting  class $\hat{Y}= y$ given sensitive variable $S=0$ and ground truth class $Y=y$ be equal to the same conditional probability but with $S=1$. In other words, $P(\hat{Y}=y|S=0, Y=y) = P(\hat{Y}=y|S=1, Y=y)$. 
Similar to the demographic parity case, we also measure the difference in these conditional probabilities for both $y=1$ (black hair) and $y=0$ (non-black hair) and report their summation as $\Delta_{EO}$, commensurate with~\cite{madras}.
%




\end{document}